\newtheorem{theorem}{Theorem}
\newtheorem{lemma}{Lemma}
\newtheorem{definition}{Definition}
\renewcommand{\citet}[1]{\citep{#1}}%
\DeclareMathOperator*{\argmin}{arg\,min}
\DeclareMathOperator*{\argmax}{arg\,max}
\newcommand{\COMMENTFORSPACE}[1]{}
\algnewcommand\algorithmicforeach{\textbf{for each}}
\algnewcommand\algorithmicswitch{\textbf{switch}}
\algnewcommand\algorithmiccase{\textbf{case}}
\algnewcommand\algorithmicassert{\texttt{assert}}
\algnewcommand\Assert[1]{\State \algorithmicassert(#1)}%
\newcommand{\Dc}{\mathcal{D}}
\newcommand{\Xc}{\mathcal{X}}
\newcommand{\Yc}{\mathcal{Y}}
\newcommand{\Rb}{\mathbb{R}}
\newcommand{\sinit}{v _{(0)}}
\newcommand{\binit}{b _{(0)}}
\newcommand{\V}{V}
\newcommand{\Vk}{V _k}
\newcommand{\E}{E}
\newcommand{\Ek}{E _k}
\newcommand{\T}{T}
\newcommand{\Tk}{T _k}
\newcommand{\Ex}{\mathbb E}
\newcommand{\mytitle}{Learning Beam Search Policies via Imitation Learning}
\title{\mytitle}
\author{
  Renato Negrinho$^1$ \qquad Matthew R. Gormley$^1$ \qquad Geoffrey J. Gordon$^{1,2}$ \\
  $^{1}$Machine Learning Department, Carnegie Mellon University \\
  $^{2}$Microsoft Research \\
  \texttt{\{negrinho,mgormley,ggordon\}@cs.cmu.edu}
}
\begin{document}

\maketitle

\begin{abstract}
Beam search is widely used for approximate decoding in
structured prediction problems.
Models often use a beam at test time
but ignore its existence at train time, and therefore do not
explicitly learn how to use the beam.
We develop an unifying meta-algorithm for learning beam search policies
using imitation learning.
In our setting, the beam is part of the model, and not
just an artifact of approximate decoding.
Our meta-algorithm captures existing learning algorithms and suggests new ones.
It also lets us show novel no-regret guarantees for learning beam search policies.
\end{abstract}

\section{Introduction}
\label{sec:intro}

Beam search is the dominant method for
approximate decoding in structured prediction tasks such as
machine translation~\cite{sutskever2014sequence}, speech recognition~\cite{graves2013speech},
image captioning~\cite{vinyals2015show}, and
syntactic parsing~\cite{weiss2015structured}.
Most models that use beam search at test time
ignore the beam at train time and instead
are learned via methods like likelihood maximization.
They therefore
suffer from two issues that we jointly address in this work:
(1) learning ignores the existence of the beam and
(2) learning uses only oracle trajectories.
These issues lead to mismatches between the train and
test settings that negatively affect performance.
Our work addresses these two issues simultaneously by
using imitation learning to develop novel beam-aware algorithms
with no-regret guarantees.
Our analysis is inspired by DAgger~\cite{ross2011reduction}.
Beam-aware learning algorithms use beam search at both
train and test time.
These contrast with common two-stage learning algorithms
that, first, at train time,
learn a probabilistic model via maximum likelihood,
and then, at test time, use beam search for approximate decoding.
The insight behind beam-aware algorithms is that,
if the model uses beam search at test time,
then the model should be learned using beam search at train time.
Resulting beam-aware methods run beam search at train time (i.e., roll-in)
to collect losses that are then used to update the model parameters.
The first proposed beam-aware algorithms are
perceptron-based,
updating the parameters either when the
best hypothesis does not score first in the beam~\cite{collins_incremental_2004},
or when it falls out of the beam~\cite{daume2005learning}.

While there is substantial prior work on beam-aware algorithms,
none of the existing algorithms
expose the learned model
to its own consecutive mistakes at train time.
When rolling in with the learned model,
if a transition leads to a beam without the correct hypothesis,
existing algorithms either stop
\cite{collins_incremental_2004, huang2012structured, andor2016globally}
or reset to a beam with the correct hypothesis
\cite{daume2005learning, xu2007learning,
  wiseman2016sequence}.\footnote{\citet{goyal2017continuous} take a
  different approach by training with a differentiable approximation
  of beam search, but decode with the standard (non-differentiable)
  search algorithm at test time.}
Additionally, existing beam-aware algorithms either
do not have theoretical guarantees or
only have perceptron-style guarantees~\cite{xu2007learning}.
We are the first to prove no-regret guarantees for an algorithm
to learn beam search policies.
Imitation learning algorithms,
such as DAgger~\cite{ross2011reduction},
leverage the ability to query an oracle
at train time to learn a model that is competitive (in the no-regret sense)
to the best model in hindsight.
Existing imitation learning algorithms
such as
SEARN~\cite{daume2009search-based},
DAgger~\cite{ross2011reduction}\footnote{
Scheduled sampling~\cite{bengio2015scheduled} is an instantiation of DAgger.},
AggreVaTe~\cite{ross2014reinforcement}, and
LOLS~\cite{chang2015learning},
execute the learned model at train time to collect data that is
then labeled by the oracle and used for retraining.
Nonetheless, these methods do not take the beam into account at train time,
and therefore do not learn to use the beam effectively at test time.

We propose a new approach to learn beam search policies
using imitation learning that
addresses these two issues.
We formulate the problem as learning a policy
to traverse the combinatorial search space of beams.
The learned policy is induced via a scoring function:
the neighbors of the elements of a beam are scored and
the top $k$ are used to form the successor beam.
We learn a scoring function to match
the ranking induced by the oracle costs of the neighbors.
We introduce training losses that capture this insight, among which
are variants of the weighted all pairs loss~\cite{beygelzimer2008machine}
and existing beam-aware losses.
As the losses we propose are differentiable with respect to the scores,
our scoring function can be learned using modern online optimization
algorithms, e.g. Adam~\cite{kingma2015adam}.

In some problems (e.g., sequence labeling and syntactic parsing)
we have the ability
to compute oracle completions and oracle completion costs for
non-optimal partial outputs.
Within our imitation learning framework, we can
use this ability to compute oracle completion costs
for the neighbors of the elements of a beam at train time to
induce an oracle that allows us to continue collecting supervision
after the best hypothesis falls out of the beam.
Using this oracle information, we are able to propose a
DAgger-like beam-aware algorithm with no-regret guarantees.

We describe our
novel learning algorithm as an
instantiation of a meta-algorithm for learning beam search policies.
This meta-algorithm sheds light into key design decisions that
lead to more performant algorithms, e.g., the
introduction of better training losses.
Our meta-algorithm
captures much of the existing literature on beam-aware methods
(e.g.,~\cite{daume2005learning, huang2012structured}),
allowing a clearer understanding of and comparison to existing
approaches, for example, by emphasizing that they arise from specific choices of
training loss function and data collection strategy, and by proving
novel regret guarantees for them.

Our contributions are:
an algorithm for learning beam search policies
(Section~\ref{sec:data_collection}) with accompanying regret
guarantees (Section~\ref{sec:theory}),
a meta-algorithm that captures much of the existing literature
(Section~\ref{sec:learning}), and new theoretical results for the
early update~\cite{collins_incremental_2004} and LaSO
\cite{daume2005learning} algorithms (Section~\ref{sec:main_arbitrary_dc_policies}).

\section{Preliminaries}
\label{sec:preliminaries}

\paragraph{Structured Prediction as Learning to Search}
We consider structured prediction in the learning to search framework
\cite{daume2009search-based,ross2011reduction}.
Input-output training pairs $D=\{(x_1, y_1), \ldots, (x_m, y_m)\}$ are
drawn according to a data generating distribution $\Dc$ jointly over an
input space $\Xc$ and an output space $\Yc$.
For each input $x \in \Xc$, there is an underlying search space
$G_x = (V_x, E_x)$ encoded as a directed graph with
nodes $V_x$ and edges $\E _x$.
Each output $y \in \Yc_x$ is encoded as a terminal node
in $G_x$, where $\mathcal Y _x \subseteq \mathcal Y$ is the
set of valid structured outputs for $x$.

In this paper,
we deal with stochastic policies $\pi : V _x \to \Delta(V _x)$,
where $\Delta(V _x)$ is the set of probability distributions over
nodes in $V _x$.
\COMMENTFORSPACE{
The distributions computed by a policy $\pi : V_x \to \Delta(V _x)$ must
respect the structure of $G_x$, meaning that for all $v \in V_x$,
the support of $\pi(v)$ must be contained in the neighborhood of $v$.
}
(For convenience and brevity of presentation,
we make our policies deterministic later in the paper through the introduction
of a tie-breaking total order over the elements of $V_x$,
but our arguments and theoretical results hold more generally.)
The goal is to learn a
stochastic policy $\pi(\cdot, x, \theta) : \V _x \to \Delta(\V _x)$
parametrized by $\theta \in \Theta \subseteq \mathbb R ^p$
that traverses the induced search spaces,
generating outputs with small expected cost;
i.e., ideally, we would want to minimize
\begin{equation}
c(\theta) =
  \mathbb E _{(x, y) \sim \mathcal D }
  \mathbb E _{\hat y \sim \pi(\cdot, x, \theta)}
    c _{x, y} (\hat y),
  \label{eq:opt_prob}
\end{equation}
where $c _{x, y} : \mathcal Y _x \to \mathbb R$ is the cost function
comparing the ground-truth labeling $y$ to the predicted labeling $\hat y$.
We are not able to optimize directly the loss in Equation~\eqref{eq:opt_prob},
but we are able to find a mixture of policies
$\theta _1, \ldots, \theta _m$, where $\theta _t \in \Theta$
for all $t \in [m]$, that is competitive
with the best policy in $\Theta$
in the distribution of trajectories induced by the mixture of
$\theta _1, \ldots, \theta _m$.
We use notation $\hat y \sim \pi(\cdot, x, \theta)$ to mean that $\hat y$
is generated by sampling a trajectory $v _1, \ldots, v _h$ on $G _x$
by executing policy $\pi(\cdot, x, \theta)$, and returning
the labeling $\hat y \in \mathcal Y$ associated with terminal node $v _h \in T$.
The search spaces, cost functions and policies depend on
$x \in \mathcal X$ or $(x, y) \in \mathcal X \times \mathcal Y$---in
the sequel, we omit indexing by example for conciseness.

\paragraph{Search Space, Cost, and Policies}
Each example
$(x, y) \in \mathcal X \times \mathcal Y$ induces a
search space $G = (V, E)$ and a cost function $c : \mathcal Y \to \mathbb R$.
For all $v \in \V$, we introduce
its set of neighbors
$N _v = \{v' \in \V \mid (v, v') \in \E \}$.
We identify a single initial node $\sinit \in V$.
We define the set of terminal nodes
$\T = \{v \in \V \mid N _v = \emptyset\}$.
We assume without loss of generality that all nodes are reachable from $\sinit$
and that all nodes have paths to terminal nodes.
For clarity of exposition,
we assume that $G$ is a tree-structured directed graph
where all terminals nodes are at distance $h$ from the root $\sinit$.\footnote{
    We describe in Appendix~\ref{sec:ConversionToTreeStructured} how to
convert a directed graph search space to a tree-structured one with all terminals
at the same depth.}

Each terminal node $v \in T$ corresponds to a complete output $y \in \Yc$,
which can be compared to the ground-truth $y^* \in \Yc$
via a cost function $c : T \to \mathbb R$ of interest
(e.g.,
Hamming loss in sequence labeling or negative BLEU score~\cite{papineni2002bleu}
in machine translation).
We define the optimal completion cost function
$c^* : \V\to \mathbb R$, which computes
the cost of the best terminal node reachable from $v \in V$ as
$c^*(v) = \min _{v' \in T _{v} } c(v')$,
where $T_v$ is the set of terminal nodes reachable from $v$.

The definition of $c ^*: V \to \mathbb R$ naturally gives rise to an
oracle policy $\pi^*(\cdot, c^*) : \V \to \Delta(\V)$.
At $v \in \V$, $\pi^*(v, c^*)$ can be any fixed
distribution (e.g., uniform or one-hot) over
$\argmin_{v' \in N_v} c^*(v')$.
For any state $v \in \V$, executing $\pi^*(\cdot, c^*)$
until arriving at a terminal node
achieves the lowest possible cost for completions of $v$.

At $v \in \V$, a greedy policy $\pi : V \to \Delta(V)$ induced
by a scoring function
$s : V \to \mathbb R$ computes a fixed
distribution
$\pi(v, \theta)$ over $\argmax_{v' \in N_v} s(v', \theta)$.
When multiple elements are tied with the same highest score, we
can choose an arbitrary distribution over them.
If there is a single highest scoring element, the policy is deterministic.
In this paper, we assume the existence of a total order over the elements
of $V$ that is used for breaking ties induced by a scoring function.
The tie-breaking total ordering allows us to talk
about a particular unique ordering, even when ties occur.
The oracle policy $\pi^*(\cdot, c^*) : V \to \Delta(V)$ can be thought
as being induced by the scoring function $-c^*: V \to \mathbb R$.

\COMMENTFORSPACE{
\paragraph{Running Example}
We use part-of-speech tagging as a simple running example: an
input $x_{1:l} \in \Xc$ is a sequence of words of length
$l$ and an output $y_{1:l} \in \Yc$ is a sequence of part-of-speech tags of the
same length.
A terminal node $v \in T$ is a complete tagging $y_{1:l} \in \Yc$.
For each non-terminal node $v = y _{1:j} \in \V \setminus T$,
the set of neighbors $N _v$ encodes extending partial tagging $y _{1:j}$
by tagging token $x _{j + 1}$ with
each possible part-of-speech tag.
As part-of-speech tagging does not impose constraints on tag sequences, all
non-terminals have the same number of neighbors, i.e., for all
$v, v' \in \V \setminus T$, $|N _v| = | N _{v'}|$.
Search spaces for part-of-speech tagging are tree-structured.
A common cost function is the Hamming loss, yielding the optimal cost
$c^*(y_{1:j}) = \sum _{i = 1} ^{j} \mathbbm{1} [ y _i \neq y ^* _i ]$,
i.e., the number of misclassifications in the prefix.
In this case, the oracle policy $\pi^*(\cdot, c^*) : \V \to \V$ is deterministic
and extends
a partial tagging $y_{1:j}$ with the
next ground-truth tag $y ^*_{j +  1}$,
for all $i \in [l - 1]$.
}

\section{Beam search}
\label{sec:beam_search}

\begin{wrapfigure}[17]{R}{0.43\textwidth}
  \begin{minipage}{0.43\textwidth}
\vspace{-2em}
\begin{algorithm}[H]
  \begin{algorithmic}[1] %
    \Function{BeamSearch}{$G, k, \theta$}
      \State $b \gets \{\sinit\} \equiv \binit$ %
      \While{$\textproc{Best}(b, 1, s(\cdot, \theta)) \notin T$}
        \State $b \gets \textproc{Policy}(G, b, k, s(\cdot, \theta))$
      \EndWhile
      \State \textbf{return} $\textproc{Best}(b, 1, s(\cdot, \theta))$ %
    \EndFunction
    \vspace{.5em}
    \hrule
    \vspace{.5em}
    \Function{Policy}{$G, b, k, f$}
      \State Let $A_b = \cup_{v \in b} N_v$ %
      \State {\bf return} $\textproc{Best}(A_b, k, f)$ %
    \EndFunction
    \vspace{.5em}
    \hrule
    \vspace{.5em}
    \Function{Best}{$A, k, f$}
      \State Let $A = \{v_1, \ldots, v_n\}$ be ordered %
      \State $\quad$ such that $f(v_1) \geq \cdots \geq f(v_n)$
      \State Let $k' = \min(k, n)$
      \State \textbf{return} ${v_1, \ldots, v_{k'}}$
    \EndFunction
  \end{algorithmic}
  \caption{Beam Search}
  \label{alg:beam_search}
\end{algorithm}
\end{minipage}
\end{wrapfigure}

\paragraph{Beam Search Space}
\label{sec:beam_search_space}

Given a search space $G$, we construct
its beam search space $G_k = (\Vk, \Ek)$,
where $k \in \mathbb N$ is the maximum beam capacity.
$\Vk$ is the set of possible
beams that can be formed along the search process,
and $\Ek$ is the set of possible beam transitions.
Nodes $b \in \Vk$ correspond to
nonempty sets of nodes of $\V$ with
size upper bounded by $k$, i.e.,
$b = \{ v _1, \ldots, v _{ |b| } \}$ with
$1 \leq |b| \leq k$
and $v _i \in \V$ for all $i \in [|b|]$.
The initial beam state $\binit \in \Vk$
is the singleton set with the
initial state $\sinit \in \V$.
Terminal nodes in $T _k$
are singleton sets with a single terminal node $v \in T$.
For $b \in \Vk$,
we define $A _b = \cup _{v \in b} N _{v}$, i.e.,
the union of the neighborhoods of the elements in $b$.

Algorithm~\ref{alg:beam_search} describes the beam search variant
used in our paper.
In this paper, all elements in the beam
are simultaneously expanded when transitioning.
It is possible to define different beam search space variants, e.g.,
by considering different expansion strategies or by handling terminals
differently (in the case where terminals can be at different depths).
The arguments developed in this paper can be extended to those variants
in a straightforward manner.

\paragraph{Beam Costs}
\label{sec:beam_costs}

We define the cost of a beam to be the cost of its lowest cost element, i.e.,
we have $c^* : \Vk \to \mathbb R$ and,
for $b \in \Vk$,
$c^*(b) = \min _{v \in b} c^*(v)$.
We define the beam transition cost function $c : \Ek \to \mathbb R$ to be
$c(b, b') = c^*(b') - c^*(b)$,
for $(b, b') \in E _k$, i.e., the difference in cost between the
lowest cost element in $b'$ and the lowest cost element in $b$.

A cost increase occurs on a transition
$(b, b') \in \Ek$ if $c^*(b') > c^*(b)$, or equivalently,
$c(b, b') > 0$, i.e., $b'$ dropped all the
lowest cost neighbors of the elements of $b$.
For all $b \in \Vk$, we define
$N ^* _b = \{ b' \in N _b \mid c(b, b') = 0 \}$,
i.e., the set of beams neighboring $b$
that do not lead to cost increases.
We will significantly overload notation, but usage will be clear from
context and argument types, e.g., when referring to
$c^* : V \to \mathbb R$ and $c^* : V_k \to \mathbb R$.

\begin{wrapfigure}[27]{R}{0.55\textwidth}
  \vspace{-2em}
\begin{minipage}{0.55\textwidth}
\begin{algorithm}[H]
    \begin{algorithmic}[1] %
      \Function{Learn}{$D, \theta _1, k$}
          \For{each $t \in [ |D| ]$}
            \State Induce $G$ using $x _t$
            \State Induce $s(\cdot, \theta _t): V \to \mathbb R$ using $G$ and $\theta _t$
            \State Induce $c ^*: V \to \mathbb R$ using $(x _t, y _t$)
            \State $b _{1:j} \gets \textproc{BeamTrajectory}(G, c^*, s(\cdot, \theta _t), k)$
            \State Incur losses $\ell(\cdot, b _1), \ldots, \ell(\cdot, b_{j - 1})$
            \State Compute $\theta _{t + 1}$ using
              $\sum_{i = 1} ^{j - 1} \ell(\cdot, b _i)$, e.g., by SGD or Adam
          \EndFor
        \State {\bf return} best $\theta_t$ on validation
      \EndFunction
    \vspace{.5em}
    \hrule
    \vspace{.5em}
    \Function{BeamTrajectory}{$G, c^*, f, k$}
    \State $b _1 \gets \{\sinit\} \equiv \binit$
    \State $j = 1$
    \While{$\textproc{Best}(b _j, 1, f) \notin T$}
        \If{strategy is oracle}
          \State $b _{j + 1} \gets \textproc{Policy}(G, b_j, k, -c ^*)$
        \Else
          \State $b _{j + 1} \gets \textproc{Policy}(G, b_j, k, f)$
          \If{$c^*(b _{j + 1}) > c^*(b _j)$}
            \If{strategy is stop}
              \State break
            \EndIf
            \If{strategy is reset}
              \State $b _{j + 1} \gets \textproc{Policy}(G, b _j, 1, -c^*)$
            \EndIf
          \EndIf
        \EndIf
        \State $j \gets j + 1$
    \EndWhile
    \State \textbf{return} $b _{1:j}$
    \EndFunction
    \end{algorithmic}
    \caption{Meta-algorithm}
    \label{alg:meta_algorithm}
  \end{algorithm}
\end{minipage}
\end{wrapfigure}

\paragraph{Beam Policies}
\label{sec:beam_policies}

Let $\pi : \Vk \to \Delta(\Vk)$ be a policy induced
by a scoring function
$f : V \rightarrow \Rb$.
To sample $b' \sim \pi(b)$
for a beam $b \in \Vk$, form $A _b$, and
compute scores $f(v)$ for all $v \in A _b$; let $v _1, \ldots, v _n$ be the
elements of $A _b$ ordered such that
$f(v _1) \geq \ldots \geq f(v _n)$;
if $v _1 \in T$,
$b' = \{ v _1 \}$; if
$v _1 \not \in T$, let $b'$ pick the $k$ top-most
elements from $A _b \setminus T$.
At $b \in V_k$,
if there are many orderings that sort the scores of the
elements of $A_b$,
we can choose a single one deterministically or sample one stochastically;
if there is a single such ordering,
the policy $\pi : V_k \to \Delta(V_k)$ is deterministic at $b$.

For each $x \in \mathcal X$, at train time,
we have access to the optimal path cost function
$c^*: V \to \mathbb R$, which induces the oracle policy
$\pi ^*(\cdot, c^*): \Vk \to \Delta(\Vk)$.
At a beam $b$, a successor beam $b' \in N_b$ is optimal if $c^*(b') = c^*(b)$,
i.e., at least one neighbor with the smallest possible cost was included in $b'$.
The oracle policy $\pi ^*(\cdot, c^*) : V_k \to \Delta(V_k)$ can be seen as using
scoring function $-c^*: V_k \to \mathbb R$ to transition in the beam
search space $G_k$.
\COMMENTFORSPACE{
i.e., for $b \in V_k$, $\pi ^*(b, c^*)$ is a distribution
over $\argmin_{b' \in N_b} c^*(b')$.
}

\COMMENTFORSPACE{
\paragraph{Running Example}
In our part-of-speech tagging example, a beam $b \in V_k$ at step $j$ is a set
of length-$j$ partial taggings $\{y_{1,1:j}, \ldots, y_{k,1:j}\}$.
We transition to a successor beam $b'$ by extending the length
of each tagging
$y_{i,1:j}$ by one tag and choosing the top $k$ taggings of length
$j+1$ according to the scoring function.
For all $b \in \Vk$,
$c^*(b)$ is the smallest number of wrong part-of-speech tags
among the hypothesis in $b$.
}

\section{Meta-Algorithm}
\label{sec:learning}
Our goal is to learn a policy $\pi(\cdot, \theta) : \Vk
\rightarrow \Delta(\Vk)$ induced by a scoring function $s(\cdot, \theta) : V
\rightarrow \Rb$ that achieves small expected cumulative transition
cost along the induced trajectories.
Algorithm~\ref{alg:meta_algorithm} presents our meta-algorithm in detail.
\COMMENTFORSPACE{
In Algorithm~\ref{alg:meta_algorithm}, we repeatedly sample an $(x,y)$ pair
from our training distribution. We run a beam search according to our
current parameter vector $\theta$. During the search, we accumulate
loss functions (one per beam transition) based on $(x, y)$, and the
current beam state. At the end of each search, we update $\theta$
according to the accumulated losses.
}
Instantiating our meta-algorithm requires choosing
both a surrogate training loss function (Section~\ref{sec:surrogate_losses})
and a data collection strategy (Section~\ref{sec:data_collection}).
Table~\ref{tab:priorWork} shows how existing algorithms can be obtained as
instances of our meta-algorithm with specific choices of loss function, data collection
strategy, and beam size.

\subsection{Surrogate Losses}
\label{sec:surrogate_losses}

\paragraph{Insight}
In the beam search space, a prediction $\hat y \in \mathcal Y _x$
for $x \in \mathcal X$ is generated by running
$\pi(\cdot, \theta)$ on $G_k$.
This yields a beam trajectory $b _{1:h}$, where $b _1 = \binit$ and
$b _h \in \Tk$.
We have
\begin{align}
c(\theta)
  = \mathbb E _{(x, y) \sim \mathcal D }
    \mathbb E _{\hat y \sim \pi(\cdot, \theta)}
      c (\hat y)
  = \mathbb E _{(x, y) \sim \mathcal D }
      \mathbb E _{b _{1:h} \sim \pi(\cdot, \theta)}
        c ^*(b _h).
\end{align}
The term $c^*(b_h)$ can be written in a telescoping manner as
\begin{align}
  c ^*(b _h) =
    c^*(b _1) +
      \sum _{i = 1} ^{h - 1} c ( b _i, b _{i + 1} ).
\end{align}
As $c^*(b _1)$ depends on an example
$(x, y) \in \mathcal X \times \mathcal Y$,
but not on the parameters $\theta \in \Theta$,
the set of minimizers of $c : \Theta \to \mathbb R$ is the same as the
set of minimizers of
\begin{align}
  c'(\theta)
    &= \mathbb E _{(x, y) \sim \mathcal D }
        \mathbb E _{b _{1:h} \sim \pi(\cdot, \theta)}
          \left(
            \sum _{i = 1} ^{h - 1} c ( b _i, b _{i + 1} )
          \right).
          \label{eq:opt_beam_last}
\end{align}
It is not easy to minimize the cost function in Equation~\eqref{eq:opt_beam_last}
as, for example,
$c(b, \cdot) : V _k \to \mathbb R$ is combinatorial.
To address this issue, we observe the following by using linearity of expectation
and the law of iterated expectations to decouple the term in the sum over the trajectory:
\begin{align}
  \mathbb E _{b _{1:h} \sim \pi(\cdot, \theta)}
    \left(
      \sum _{i = 1} ^{h - 1} c ( b _i, b _{i + 1} )
    \right)
   &=
      \sum _{i = 1} ^{h - 1}
        \mathbb E _{b _i \sim d _{\theta, i} }
          \mathbb E _{b _{i + 1} \sim \pi( b_i, \theta) }
            c ( b _i, b _{i + 1} ) \nonumber \\
   &=
    \mathbb E _{b _{1:h} \sim \pi(\cdot, \theta)}
      \left(
        \sum _{i = 1} ^{h - 1}
          \mathbb E _{b' \sim \pi( b_i, \theta) }
            c ( b _i, b' )
      \right),
\end{align}
where $d _{\theta, i}$ denotes the distribution over beams in $V _k$
that results from following $\pi(\cdot, \theta)$ on $G _k$ for $i$ steps.
We now replace $\Ex _{ b' \sim \pi(b, \cdot) } c(b, b') : \Theta \to \mathbb R$ by a
surrogate loss function
$\ell(\cdot, b) : \Theta \to \mathbb R$
that is differentiable with respect to the
parameters $\theta \in \Theta$,
and where
$\ell(\theta, b)$ is a surrogate loss for the expected cost increase
incurred by following policy $\pi(\cdot, \theta)$ at beam $b$ for one step.

Elements in $A _b$ should be scored in a way that allows
the best elements to be kept in the beam.
Different surrogate losses arise from which elements we concern ourselves with, e.g.,
all the top $k$ elements in $A _b$ or simply one of the best elements in $A _b$.
Surrogate losses are then large when the scores
lead to discarding desired elements in $A _b$, and
small when the scores lead to comfortably keeping the desired elements in $A_b$.

\paragraph{Surrogate Loss Functions}
The following additional notation allows us to define losses
precisely.
Let $A _b = \{ v _1, \ldots, v _n \}$ be an arbitrary ordering of
the neighbors of the elements in $b$.
Let $c = c _1, \ldots, c _n$ be the corresponding costs, where
$c_i = c^*(v_i)$ for all $i \in [n]$, and
$s = s _1, \ldots, s _n$ be the
corresponding scores,
where $s _i = s(v _i, \theta)$ for all $i \in [n]$.
Let $\sigma ^*: [n] \to [n]$ be a permutation such that
$c _{\sigma ^*(1)} \leq \ldots \leq c _{\sigma ^*(n)}$, i.e.,
$v _{ \sigma ^*(1) }, \ldots, v _{ \sigma ^*(n) }$
are ordered in increasing order of cost.
Note that $c^*(b) = c _{ \sigma ^* (1) }$.
Similarly, let $\hat \sigma : [n] \to [n]$ be a permutation
such that $s _{\hat \sigma (1)} \geq \ldots \geq s _{\hat \sigma (n)}$, i.e.,
$v _{ \hat \sigma(1) }, \ldots, v _{ \hat \sigma(n) }$
are ordered in decreasing order of score.
We assume unique $\sigma^* : [n] \to [n]$ and $\hat \sigma: [n] \to [n]$ for
simplifying the presentation of the loss functions
(which can be guaranteed via the tie-breaking total order on $V$).
In this case, at $b \in V_k$, the successor beam $b' \in N_b$ is uniquely
determined by the scores of the elements of $A _b$.

For each $(x, y) \in \mathcal X \times \mathcal Y$,
the corresponding cost function $c^* : \V \to \mathbb R$
is independent of the parameters $\theta \in \Theta$.
We define a loss function $\ell(\cdot, b) : \Theta \to \mathbb R$
at a beam $b \in V_k$ in terms of the oracle
costs of the elements of $A_b$.
We now introduce some well-motivated surrogate loss functions.
Perceptron and large-margin inspired losses have been used in
early update~\cite{collins_incremental_2004},
LaSO~\cite{daume2005learning}, and BSO~\cite{wiseman2016sequence}.
We also introduce two log losses.

\paragraph{perceptron (first)}
  Penalizes the lowest cost element in $A _b$
  not being put at the top of the beam.
  When applied on the first cost increase, this is
  equivalent to an ``early update''~\cite{collins_incremental_2004}.
  \begin{align}
    \ell(s, c) =
            \max
              \left( 0, s _{ \hat \sigma (1) } - s _{ \sigma ^*(1) } \right).
      \label{eq:perceptron_first}
  \end{align}
\paragraph{perceptron (last)}
  Penalizes the lowest cost element in $A _b$ falling out of the beam.
  \begin{align}
    \ell(s, c) =
          \max
            \left( 0, s _{ \hat \sigma (k) } - s _{ \sigma ^*(1) } \right).
    \label{eq:perceptron_last}
  \end{align}
\paragraph{margin (last)}
  Prefers the lowest cost element to be scored higher than
  the last element in the beam by a margin.
  This yields updates that are similar but not identical to the approximate
  large-margin variant of LaSO~\cite{daume2005learning}.
  \begin{align}
    \ell(s, c) =
          \max
            \left( 0, 1 + s _{ \hat \sigma (k) } - s _{ \sigma ^*(1) } \right)
    \end{align}
\paragraph{cost-sensitive margin (last)}
    Weights the margin loss by the cost
    difference between the lowest cost element and the last element in
    the beam. When applied on a LaSO-style cost increase, this is
  equivalent to the BSO update of \citet{wiseman2016sequence}.
  \begin{align}
    \ell(s, c) =
        (c _{\hat \sigma (k) } - c _{ \sigma ^* (1) })
          \max
            \left( 0, 1 + s _{ \hat \sigma (k) } - s _{ \sigma ^*(1) } \right).
    \label{eq:cs_margin_last}
  \end{align}
  \paragraph{upper bound}
  Convex upper bound to the expected beam transition cost,
  $\mathbb E _{b' \sim \pi(b, \cdot)} c(b, b') : \Theta \to \mathbb R$,
  where $b'$ is induced by the scores $s \in \mathbb R ^n$.
  \begin{align}
    \ell(s, c) =
    \max
      \left(
        0, \delta _{k + 1}, \ldots, \delta _{n}
      \right)
      \label{eq:cvx_upper_bound}
  \end{align}
  where $\delta _j = ( c _{ \sigma ^*(j) } - c _{ \sigma ^*(1)} )
  ( s _{\sigma ^*(j)} - s _{\sigma ^*(1)} + 1 )$
  for $j \in \{k + 1, \ldots, n\}$.
  Intuitively, this loss imposes a cost-weighted margin between the
  best neighbor $v _{ \sigma^* (1) } \in A _b$ and the neighbors
  $v _{\sigma^*(k + 1) }, \ldots,  v _{\sigma^*(n) } \in A _b$ that ought
  not to be included in the best successor beam $b'$.
  We prove in Appendix~\ref{sec:upper_bound} that this loss
  is a convex upper bound for the expected beam transition cost.

\paragraph{log loss (beam)}
  Normalizes only over the top $k$
  neighbors of a beam according to the scores $s$.
  \begin{align}
        \ell(s, c)
          &= - s _{ \sigma ^*(1) } + \log
          \left(
              \sum _{ i \in I }  \exp ( s _i )
          \right) ,
      \label{eq:andor}
  \end{align}
  where $I =\{ \sigma ^*(1), \hat \sigma(1), \ldots, \hat \sigma(k) \}$.
  The normalization is only over the correct element $v _{\sigma ^*(1)}$ and the
  elements included in the beam.
  The set of indices $I \subseteq [n]$ encodes the fact that
  the score vector $s \in \mathbb R^n$ may not place
  $v _{\sigma^*(1)}$
  in the top
  $k$, and therefore it has to also be included in that case.
  This loss is used in \citet{andor2016globally}, albeit introduced differently.
\paragraph{log loss (neighbors)}
    Normalizes over all elements in $A_b$.
    \begin{align}
          \ell(s, c)
          &= - s _{ \sigma ^*(1) } + \log
          \left(
              { \sum _{ i = 1} ^n \exp ( s _i ) }
          \right)
      \label{eq:log_all}
  \end{align}

\paragraph{Discussion}
The losses here presented directly capture
the purpose of using a beam for prediction---ensuring that the best
hypothesis stays in the beam, i.e., that,
at $b \in V_k$, $v _{\sigma^*(1)} \in A _b$
is scored sufficiently high to be included in the
successor beam $b' \in N _b$.
If full cost information is not accessible,
i.e., if are not able to evaluate $c^* : V \to \mathbb R$
for arbitrary elements in $V$,
it is still possible to use a subset of these losses, provided that
we are able to identify the lowest cost element among the neighbors of a beam,
i.e., for all $b \in \Vk$, an element $v \in A _b$,
such that $c^*(v) = c^*(b)$.

While certain losses do not
appear beam-aware
(e.g., those in Equation~\eqref{eq:perceptron_first} and Equation~\eqref{eq:log_all}),
it is important to keep in mind that all
losses are collected by executing a policy on the beam search space $G_k$.
Given a beam $b \in V _k$, the score vector $s \in \mathbb R^n$ and
cost vector $c \in \mathbb R^n$ are defined for the elements of $A_b$.
The losses incurred depend on the specific beams visited.
Losses in Equation~\eqref{eq:perceptron_first}, \eqref{eq:cvx_upper_bound}, and
\eqref{eq:log_all} are convex. The remaining losses are non-convex.
For $k = 1$, we recover well-known losses, e.g., loss in
Equation~\eqref{eq:log_all} becomes a simple log loss over the neighbors
of a single node, which is precisely the loss used in typical log-likelihood
maximization models;
loss in Equation~\eqref{eq:perceptron_last} becomes a perceptron loss.
In Appendix~\ref{sec:convex_cosiderations} we discuss
convexity considerations for different types of losses.
In Appendix~\ref{sec:additional_losses},
we present additional losses and expand on
their connections to existing work.

\subsection{Data Collection Strategy}
\label{sec:data_collection}

Our meta-algorithm requires choosing a train time policy
$\pi : \Vk \rightarrow \Delta(\Vk)$ to traverse the beam search
space $G_k$ to collect supervision.
Sampling a trajectory to collect training
supervision is done by \textproc{BeamTrajectory}
in Algorithm~\ref{alg:meta_algorithm}.

\paragraph{oracle} Our simplest policy follows the oracle policy
$\pi^* : \Vk \rightarrow \Delta(\Vk)$ induced by the optimal completion cost function
$c^* : V \rightarrow \Rb$ (as in Section~\ref{sec:beam_policies}).
Using the terminology of Algorithm~\ref{alg:beam_search}, we can write
$\pi^*(b, c ^*) = \textproc{Policy}(G, b, k, -c^*)$. This policy
transitions using the negated sorted costs of the elements in $A _b$
as scores.
\COMMENTFORSPACE{
For $b \in \Vk$,
while transitioning using the negated sorted costs of the elements in $A _b$
as scores is the obvious choice,
any beam $b' \in N _b ^*$ is a valid optimal neighboring beam, and therefore,
$\pi^*(b, c^*)$ can be any distribution over $N _b ^*$.
}

The oracle policy does not address the distribution mismatch problem.
At test time, the learned policy will make mistakes and
visit beams for which it has not collected supervision at train time,
leading to error compounding.
Imitation learning tells us that
it is necessary to collect supervision at train time with the learned policy
to avoid error compounding at test time~\cite{ross2011reduction}.

We now present data collection strategies that use the learned policy.
For brevity, we only cover the case where the learned policy is always used
(except when the transition leads to a cost-increase), and leave
the discussion of additional possibilities
(e.g., probabilistic interpolation of learned and oracle policies)
to Appendix~\ref{sec:arbitrary_dc_policies}.
When an edge $(b, b') \in \Ek$
incurring cost increase is traversed, different strategies are possible:
    \paragraph{stop}  Stop
    collecting the beam trajectory.
    The last beam in the trajectory is $b'$, i.e., the beam on which we
    arrive in the transition that led to a cost increase.
  This data collection
  strategy is used in structured perceptron training with early
  update~\cite{collins_incremental_2004}.
\paragraph{reset} Reset the beam to contain
  only the best state as defined by the optimal completion cost function:
  $b' = \textproc{Best}(b, 1, -c^*)$.
  In the subsequent steps of the policy, the beam grows back
  to size $k$.
  LaSO~\cite{daume2005learning} uses this data collection strategy.
  Similarly to the oracle data collection strategy, rather than committing
  to a specific $b' \in N ^*_b$, we can sample $b' \sim \pi ^*(b, c^*)$
  where $\pi ^*(b, c^*)$ is any distribution over $N_b^*$.
  The reset data collection strategy collects beam trajectories where
  the oracle policy $\pi $ is executed conditionally, i.e., when the
  roll-in policy $\pi(\cdot, \theta_t)$ would lead to a cost increase.
  \paragraph{continue} We can ignore the cost
  increase and continue following policy $\pi_t$. This is the
  strategy taken by DAgger~\cite{ross2011reduction}.
  The continue data collection strategy has not been considered in the beam-aware
  setting, and therefore it is a novel contribution of our work.
  Our stronger theoretical guarantees apply to this case.

  \begin{table}[bt]
    \caption{Existing and novel beam-aware algorithms as instances of our meta-algorithm.
      Our theoretical guarantees require the existence of a
      deterministic no-regret online learning algorithm for the
      resulting problem.}
    \label{tab:priorWork}
    \centering
    \vspace{.5em}
    \begin{tabular}{l|lll}
      \toprule
      {\bf Algorithm} & \multicolumn{3}{|c}{\bf Meta-algorithm choices} \\
      & data collection & surrogate loss & $k$ \\
      \midrule
      log-likelihood & { oracle} & log loss (neighbors) & 1 \\
      {\sc DAgger}~\cite{ross2011reduction} & { continue} & log loss (neighbors) & 1 \\
      early update~\cite{collins_incremental_2004} & { stop} & perceptron (first) & $>1$\\
      LaSO (perceptron)~\cite{daume2005learning} & { reset} & perceptron (first) & $>1$ \\
      LaSO (large-margin)~\cite{daume2005learning} & { reset} & margin (last) & $>1$ \\
      BSO~\cite{wiseman2016sequence} & { reset} & cost-sensitive margin (last) & $>1$ \\
      globally normalized~\cite{andor2016globally} & { stop} & log loss (beam)&$>1$ \\
      Ours & { continue} & [choose a surrogate loss] & $>1$ \\
      \bottomrule
    \end{tabular}
  \end{table}

\section{Theoretical Guarantees}
\label{sec:theory}

We state regret guarantees for learning beam search policies using the
continue, reset, or stop data collection strategies.
One of the main contributions of our work is framing the problem of learning
beam search policies in a way that allows us
to obtain meaningful regret guarantees.
Detailed proofs are provided in Appendix~\ref{sec:noregret}.
We begin by analyzing the continue collection strategy. As we will
see, regret guarantees are stronger for continue than for stop
or reset.

No-regret online learning algorithms have an important role in the proofs
of our guarantees.
Let $\ell _1, \ldots, \ell _m$ be a sequence of loss functions with
$\ell _t : \Theta \to \mathbb R$ for all $t \in [m]$.
Let $\theta _1, \ldots, \theta _m$ be a sequence of iterates with
$\theta _t \in \Theta$ for all $t \in [m]$.
The loss function $\ell _t$ can be chosen according to an arbitrary rule
(e.g., adversarially).
The online learning algorithm chooses the
iterate $\theta _t$. Both $\ell _t$ and $\theta _t$  are chosen online, as functions
of loss functions $\ell _1, \ldots, \ell _{t - 1}$ and
iterates $\theta _1, \ldots, \theta _{t - 1}$.
\begin{definition}
  An online learning algorithm is no-regret if for any
  sequence of functions $\ell _1, \ldots, \ell _m$ chosen according to the
  conditions above we have
  \begin{align}
      \frac 1 m \sum _{t = 1} ^m \ell _t(\theta _t)
          - \min _{\theta \in \Theta}  \frac 1 m \sum _{t = 1} ^m
            \ell _t(\theta) = \gamma _m,
  \end{align}
  where $\gamma _m$ goes to zero as $m$ goes to infinity.
\label{th:no_regret_alg}
\end{definition}
Many no-regret online learning algorithms, especially for convex loss functions,
have been proposed in the literature, e.g.,
\citep{zinkevich2003online, kalai2005efficient, hazan2016introduction}.
Our proofs of the theoretical guarantees require
the no-regret online learning algorithm to be deterministic, i.e.,
$\theta _t$ to be a deterministic
rule of previous observed iterates $\theta _1, \ldots, \theta _{t - 1}$ and
loss functions $\ell _1, \ldots, \ell _{t - 1}$, for all $t \in [m]$.
Online gradient descent~\cite{zinkevich2003online} is an example of such an algorithm.

In Theorem~\ref{th:no_regret_pop}, we prove no-regret guarantees for the
case where the no-regret online
algorithm is presented with explicit expectations for the loss incurred by a
beam search policy.
In Theorem~\ref{th:lemma_cost_ub}, we upper bound the expected cost
incurred by a beam search policy as a function of its expected loss.
This result holds in cases where, at each beam,
the surrogate loss is an upper bound on the expected cost increase at that beam.
In Theorem~\ref{th:whp_bound}, we use Azuma-Hoeffding to
prove no-regret high probability bounds for the case where we only have access to
empirical expectations of the loss incurred by a policy,
rather than explicit expectations.
In Theorem~\ref{th:stopreset}, we extend Theorem~\ref{th:whp_bound} for
the case where the data
collection policy is different from the policy that we are evaluating.
These results allow us to give regret guarantees that depend
on how frequently is the data collection policy different from the
policy that we are evaluating.

In this section we simply state the results of the theorems alongside some discussion.
All proofs are presented in detail in Appendix~\ref{sec:noregret}.
Our analysis closely follows
that of DAgger~\cite{ross2011reduction},
although the results need to be interpreted in the beam search setting.
Our regret guarantees for beam-aware algorithms with different data collection
strategies are novel.

\subsection{No-Regret Guarantees with Explicit Expectations}

The sequence of functions $\ell _1, \ldots, \ell _m$ can be chosen in a way that
applying a no-regret online learning algorithm to generate the sequence of
policies $\theta _1, \ldots, \theta _m$ leads
to no-regret guarantees for
the performance of the mixture of $\theta _1, \ldots, \theta _m$.
The adversary presents the no-regret online learning algorithm
with $\ell _t = \ell(\cdot, \theta _t)$ at time $t \in [m]$.
The adversary is able to play $\ell(\cdot, \theta _t)$ because
it can anticipate $\theta _t$,
as the adversary knows the deterministic rule used by the no-regret online learning
algorithm to pick iterates.
Paraphrasing Theorem~\ref{th:no_regret_pop}, on the distribution of trajectories
induced by the
the uniform stochastic mixture of $\theta _1, \ldots, \theta _m$, the best policy
in $\Theta$ for this distribution performs as well (in the limit) as the
uniform mixture of $\theta _1, \ldots, \theta _m$.

\begin{theorem}
Let $\ell(\theta, \theta') =
  \Ex _{(x, y) \sim \mathcal D}
    \Ex _{b _{1:h} \sim \pi(\cdot, \theta')}
            \left( \sum _{i = 1} ^{h - 1} \ell(\theta, b _i) \right)$.
If the sequence $\theta _1, \ldots, \theta _m$ is chosen by a deterministic no-regret
online learning algorithm, we have
$\frac 1 m \sum _{t = 1} ^m \ell(\theta_t, \theta _t)
    - \min _{\theta \in \Theta} \frac 1 m
      \sum _{t = 1} ^m \ell(\theta, \theta _t) = \gamma _m$,
    where $\gamma _m$ goes to zero when $m$ goes to infinity.
  \label{th:no_regret_pop}
\end{theorem}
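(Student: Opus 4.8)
The plan is to recognize the statement as a direct instantiation of the abstract no-regret game of Definition~\ref{th:no_regret_alg}. First I would fix the sequence of loss functions fed to the online learner to be $\ell_t(\cdot) = \ell(\cdot, \theta_t)$, where $\ell(\theta,\theta')$ is as defined in the statement. With this choice, the no-regret inequality of Definition~\ref{th:no_regret_alg} becomes literally the claimed identity after substituting $\ell_t(\theta) = \ell(\theta,\theta_t)$. So the entire content of the proof is (i) checking that this is a legitimate instance of the online-learning protocol --- that the sequence $\ell_1,\dots,\ell_m$ is chosen ``according to the conditions'' of Definition~\ref{th:no_regret_alg} --- and (ii) checking that each $\ell_t$ lies in the function class handled by the assumed no-regret algorithm (convex, bounded).

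The second step is the well-definedness argument, by induction on $t$. At round $t$ the online learner outputs $\theta_t$ as a deterministic function of $\ell_1,\dots,\ell_{t-1}$ and $\theta_1,\dots,\theta_{t-1}$; by the inductive hypothesis these are all already determined, hence so is $\theta_t$. The adversary then sets $\ell_t = \ell(\cdot,\theta_t)$, which depends only on quantities fixed at or before round $t$. The only apparent circularity --- that $\ell_t$ depends on $\theta_t$ while $\theta_t$ is the learner's response to $\ell_{1:t-1}$ rather than to $\ell_t$ --- is exactly what the determinism hypothesis removes: the adversary can simulate the learner to compute $\theta_t$ before committing $\ell_t$, so no fixed point is required. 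This is the single place where the argument would break for a randomized learner, and it is the conceptual crux of the theorem.

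Third, I would observe that the constructed sequence is an admissible input for the assumed learner. Each per-beam surrogate $\ell(\cdot,b)$ is convex in $\theta$ (for the convex losses of Section~\ref{sec:surrogate_losses}), and summation over $i$ together with expectation over $(x,y)\sim\mathcal D$ and over the $\theta_t$-induced trajectory $b_{1:h}$ preserves convexity; boundedness is inherited similarly. Hence $\ell_1,\dots,\ell_m$ is a valid play for the deterministic no-regret learner whose existence we assumed, and Definition~\ref{th:no_regret_alg} yields
\[
\frac{1}{m}\sum_{t=1}^m \ell_t(\theta_t) - \min_{\theta\in\Theta}\frac{1}{m}\sum_{t=1}^m \ell_t(\theta) = \gamma_m,
\]
with $\gamma_m\to 0$ as $m\to\infty$. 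Substituting $\ell_t(\theta)=\ell(\theta,\theta_t)$ turns the left-hand side into $\frac{1}{m}\sum_{t=1}^m \ell(\theta_t,\theta_t) - \min_{\theta\in\Theta}\frac{1}{m}\sum_{t=1}^m \ell(\theta,\theta_t)$, which is the desired identity.

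The main obstacle is conceptual rather than computational: making precise that feeding the learner its own (anticipated) iterates as loss parameters is a legal adversary move, which both requires and motivates the determinism assumption. Everything after that point is bookkeeping, so I would keep the written proof short, emphasizing only the reduction and the anticipation argument.
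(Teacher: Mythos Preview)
Your proposal is correct and matches the paper's own argument: set $\ell_t(\cdot)=\ell(\cdot,\theta_t)$, use determinism of the learner so the adversary can anticipate $\theta_t$ and legally play this loss, then read off Definition~\ref{th:no_regret_alg}. The paper's proof is terser (it does not spell out the induction or verify convexity/boundedness, since the theorem simply assumes a deterministic no-regret learner exists for the resulting sequence), but the reduction and the anticipation argument are identical.
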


Furthermore, if for all $(x, y) \in \mathcal X \times \mathcal Y$
the surrogate loss $\ell(\cdot, b) : \Theta \to \mathbb R$ is
an upper bound on the expected cost increase
$\Ex _{b’ \sim \pi(b, \cdot)} c(b, b’) : \Theta \to \mathbb R$
for all $b \in V _k$, we can transform the surrogate loss no-regret guarantees
into performance guarantees in terms of $c : \mathcal Y \to \mathbb R$.
Theorem~\ref{th:lemma_cost_ub} tells us that if the best policy
along the trajectories induced
by the mixture of $\theta _1, \ldots, \theta _m$ in $\Theta$ incurs small
surrogate loss, then the expected cost resulting from labeling examples
$(x, y) \in \mathcal X \times \mathcal Y$ sampled from $\mathcal D$
with the uniform mixture of $\theta _1, \ldots, \theta _m$ is also small.
It is possible to transform the results about the uniform mixture
of $\theta _1, \ldots, \theta _m$
on results about the best policy among $\theta _1, \ldots, \theta _m$, e.g.,
following the arguments of \citet{cesa2004generalization},
but for brevity we do not present them in this paper.
Proofs of Theorem~\ref{th:no_regret_pop} and Theorem
\ref{th:lemma_cost_ub} are in Appendix~\ref{sec:noregret_explicit_expectations}

\begin{theorem}
  Let all the conditions in Definition~\ref{th:no_regret_alg} be satisfied.
  Additionally, let $c(\theta) =  c ^*(b _1) +
  \Ex _{(x, y) \sim \mathcal D}
    \Ex _{b _{1:h} \sim \pi(\cdot, \theta)}
      \left( \sum _{i = 1} ^{h - 1} c(b_i, b _{i + 1}) \right) =
    \Ex _{(x, y) \sim \mathcal D}
      \Ex _{b _{1:h} \sim \pi(\cdot, \theta)}
        c ^*(b _h)$.
  Let $\ell(\cdot, b) : \Theta \to \mathbb R$ be an upper bound on
  $\Ex _{b' \sim \pi(b, \cdot)} c(b, b') : \Theta \to \mathbb R$,
  for all $b \in V _k$.
  Then, $\frac 1 m \sum _{t = 1} ^m c(\theta _t) \leq
    \Ex _{(x, y) \sim \mathcal D} c ^*(b _1) +
      \min _{\theta \in \Theta} \frac 1 m
        \sum _{t = 1} ^m \ell(\theta, \theta _t) + \gamma _m$,
    where $\gamma _m$ goes to zero as $m$ goes to infinity.
    \label{th:lemma_cost_ub}
\end{theorem}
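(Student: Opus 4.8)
The plan is to combine the telescoping decomposition of $c^*(b_h)$ already established in Section~\ref{sec:surrogate_losses} with Theorem~\ref{th:no_regret_pop}, using the assumed upper-bound property of the surrogate loss to pass from cost to surrogate loss. Concretely, first I would recall the identity derived in the ``Insight'' paragraph: for any $\theta \in \Theta$,
\begin{align}
  c(\theta) - \Ex_{(x,y) \sim \mathcal D} c^*(b_1)
    = \Ex_{(x,y) \sim \mathcal D}
        \Ex_{b_{1:h} \sim \pi(\cdot,\theta)}
          \left( \sum_{i=1}^{h-1} \Ex_{b' \sim \pi(b_i,\cdot)} c(b_i, b') \right),
\end{align}
which follows from linearity of expectation and the law of iterated expectations exactly as in the decoupling step in the text. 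The point of this rewriting is that the summand is now a per-beam quantity $\Ex_{b' \sim \pi(b_i,\cdot)} c(b_i,b')$ evaluated along the trajectory, which is precisely the object the surrogate loss $\ell(\cdot, b_i)$ is assumed to dominate.

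Next I would invoke the hypothesis that $\ell(\cdot, b) : \Theta \to \mathbb R$ upper bounds $\Ex_{b' \sim \pi(b,\cdot)} c(b,b')$ pointwise in $\theta$, for every $b \in V_k$ and every $(x,y)$. Applying this at each beam $b_i$ along the trajectory and summing gives, for every $\theta$,
\begin{align}
  c(\theta) - \Ex_{(x,y)\sim \mathcal D} c^*(b_1)
    \leq \Ex_{(x,y)\sim \mathcal D}
          \Ex_{b_{1:h}\sim \pi(\cdot,\theta)}
            \left( \sum_{i=1}^{h-1} \ell(\theta, b_i) \right)
    = \ell(\theta, \theta),
\end{align}
where $\ell(\theta,\theta')$ is the notation introduced in Theorem~\ref{th:no_regret_pop}. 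Crucially, the roll-in distribution here is $\pi(\cdot,\theta)$ with the \emph{same} $\theta$ being evaluated, so the right-hand side is the diagonal term $\ell(\theta_t,\theta_t)$ when we plug in $\theta = \theta_t$.

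Now I would average over $t \in [m]$ and apply Theorem~\ref{th:no_regret_pop}: since $\theta_1,\dots,\theta_m$ are produced by a deterministic no-regret online learning algorithm against the sequence $\ell_t = \ell(\cdot,\theta_t)$, we have $\frac1m \sum_{t=1}^m \ell(\theta_t,\theta_t) \leq \min_{\theta\in\Theta} \frac1m\sum_{t=1}^m \ell(\theta,\theta_t) + \gamma_m$ with $\gamma_m \to 0$. Chaining the two inequalities yields
\begin{align}
  \frac1m \sum_{t=1}^m c(\theta_t)
    \leq \Ex_{(x,y)\sim\mathcal D} c^*(b_1)
      + \frac1m \sum_{t=1}^m \ell(\theta_t,\theta_t)
    \leq \Ex_{(x,y)\sim\mathcal D} c^*(b_1)
      + \min_{\theta\in\Theta}\frac1m\sum_{t=1}^m \ell(\theta,\theta_t) + \gamma_m,
\end{align}
which is exactly the claimed bound. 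The only genuinely delicate point is bookkeeping the variable-length trajectory and the fact that the inner sum has a data-dependent number of terms $h-1$; I would handle this by noting that the telescoping identity is an exact equality for each fixed $(x,y)$ regardless of $h$, and that the upper-bound hypothesis is assumed to hold at \emph{every} reachable beam, so the per-step domination goes through term by term before taking expectations. Everything else is a direct substitution into Theorem~\ref{th:no_regret_pop}; no additional concentration or measure-theoretic machinery is needed here since we are still working with explicit expectations.
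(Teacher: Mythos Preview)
Your proposal is correct and follows essentially the same route as the paper: bound $c(\theta)-\Ex_{(x,y)\sim\mathcal D}c^*(b_1)$ by $\ell(\theta,\theta)$ via the decoupling identity plus the pointwise upper-bound assumption, then average over $t$ and invoke Theorem~\ref{th:no_regret_pop}. The paper's argument in Appendix~\ref{sec:noregret_explicit_expectations} is the same two-step chain, stated slightly more tersely.
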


\subsection{Finite Sample Analysis}

Theorem~\ref{th:no_regret_pop} and Theorem~\ref{th:lemma_cost_ub}
are for the case where the adversary
presents explicit expectations, i.e., the loss function at time $t \in [m]$ is
$\ell _t(\cdot) = \Ex _{(x, y) \sim \mathcal D}
  \Ex _{b _{1:h} \sim \pi(\cdot, \theta _t) }
    \left( \sum _{i = 1} ^{h - 1} \ell(\cdot, b _i) \right)$.
We most likely only have access to a sample estimator
$\hat \ell(\cdot, \theta _t): \Theta \to \mathbb R$ of the true expectation:
we first sample an example $(x _t, y _t) \sim \mathcal D$, sample
a trajectory $b _{1:h}$ according to $\pi(\cdot, \theta _t)$,
and obtain $\hat \ell(\cdot, \theta _t) = \sum _{i = 1} ^{h - 1} \ell(\cdot, b _i)$.
We prove high probability no-regret guarantees for this case.
Theorem~\ref{th:whp_bound} tells us that the population surrogate loss
of the mixture of policies $\theta _1, \ldots, \theta _m$
is, with high probability, not much larger than its
empirical surrogate loss.
Combining this result with Theorem~\ref{th:no_regret_pop}
and Theorem~\ref{th:lemma_cost_ub} allows us to give finite sample
high probability results for the performance of the mixture of policies
$\theta _1, \ldots, \theta _m$.
The proof of Theorem~\ref{th:whp_bound} is found in Appendix~\ref{sec:finite_sample}.

\begin{theorem}
  Let $\hat \ell(\cdot, \theta') = \sum _{i = 1} ^{h - 1} \ell(\cdot, b _i)$
  which is generated by sampling $(x, y)$ from $\mathcal D$ (which induces
  the corresponding beam search space $G _k$ and cost functions),
  and sampling a beam trajectory using $\pi(\cdot, \theta')$.
  Let $| \sum _{i = 1} ^{h - 1} \ell(\theta, b _i) | \leq u$
  for a constant $u \in \mathbb R$, for all
  $(x, y) \in \mathcal X \times \mathcal Y$, beam trajectories $b_{1:h}$,
  and $\theta \in \Theta$.
  Let the iterates be chosen by a no-regret online learning algorithm,
  based on the sequence of losses
  $\ell _t = \hat \ell(\cdot, \theta _t) : \Theta \to \mathbb R$,
  for $t \in [m]$, then we have
  $\mathbb P
    \left(
      \frac 1 m \sum _{t = 1} ^m \ell(\theta _t, \theta _t) \leq
        \frac 1 m \sum _{t = 1} ^m \hat \ell(\theta _t, \theta _t) + \eta(\delta, m)
    \right) \geq 1 - \delta$,
    where $\delta \in (0, 1]$ and
    $\eta(\delta, m) = u \sqrt{2 \log(1 / \delta) / m}$.
  \label{th:whp_bound}
\end{theorem}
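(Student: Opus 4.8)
The plan is to recognize
$\frac{1}{m}\sum_{t=1}^m \ell(\theta_t,\theta_t) - \frac{1}{m}\sum_{t=1}^m \hat\ell(\theta_t,\theta_t)$
as a normalized sum of a bounded martingale difference sequence and to apply the Azuma--Hoeffding inequality.

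First I would set up the filtration. Let $\mathcal F_t$ be the $\sigma$-algebra generated by the examples and sampled beam trajectories of rounds $1,\ldots,t$. Because the online learning algorithm is deterministic, $\theta_t$ is a fixed function of $\ell_1,\ldots,\ell_{t-1}$; unrolling, $\ell_i=\hat\ell(\cdot,\theta_i)$ depends only on $\theta_i$ and the round-$i$ sample, and $\theta_1$ is fixed, so by induction $\theta_t$ is $\mathcal F_{t-1}$-measurable. Define $Z_t=\ell(\theta_t,\theta_t)-\hat\ell(\theta_t,\theta_t)$. Conditioning on $\mathcal F_{t-1}$, the iterate $\theta_t$ is frozen while the round-$t$ example $(x_t,y_t)\sim\mathcal D$ and the trajectory $b_{1:h}\sim\pi(\cdot,\theta_t)$ are drawn afresh, independently of the past; hence $\Ex[\hat\ell(\theta_t,\theta_t)\mid\mathcal F_{t-1}]=\Ex_{(x,y)\sim\mathcal D}\Ex_{b_{1:h}\sim\pi(\cdot,\theta_t)}\big(\sum_{i=1}^{h-1}\ell(\theta_t,b_i)\big)=\ell(\theta_t,\theta_t)$, which is exactly the definition of $\ell(\theta,\theta')$ from Theorem~\ref{th:no_regret_pop}. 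Therefore $\Ex[Z_t\mid\mathcal F_{t-1}]=0$, i.e. $\{Z_t\}$ is a martingale difference sequence adapted to $\{\mathcal F_t\}$.

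Next I would bound the conditional range. Given $\mathcal F_{t-1}$, $\ell(\theta_t,\theta_t)$ is a deterministic constant, and by the hypothesis $|\sum_{i=1}^{h-1}\ell(\theta,b_i)|\le u$ the random variable $\hat\ell(\theta_t,\theta_t)$ takes values in $[-u,u]$; hence $Z_t$ lies (conditionally) in an interval of width at most $2u$. Applying Azuma--Hoeffding to $\sum_{t=1}^m Z_t$ with these widths gives $\mathbb P(\sum_{t=1}^m Z_t\ge\epsilon)\le\exp(-2\epsilon^2/(m(2u)^2))=\exp(-\epsilon^2/(2mu^2))$. Setting the right-hand side equal to $\delta$ yields $\epsilon=u\sqrt{2m\log(1/\delta)}$; dividing through by $m$ and taking complements gives $\mathbb P(\frac{1}{m}\sum_{t=1}^m Z_t\le u\sqrt{2\log(1/\delta)/m})\ge 1-\delta$, which is the claim since $\frac{1}{m}\sum_t Z_t=\frac{1}{m}\sum_t\ell(\theta_t,\theta_t)-\frac{1}{m}\sum_t\hat\ell(\theta_t,\theta_t)$.

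I expect the only delicate point to be the conditioning/measurability bookkeeping in the first step: one must be careful that $\theta_t$ is determined strictly before the round-$t$ randomness is realized, so that $\Ex[\hat\ell(\theta_t,\theta_t)\mid\mathcal F_{t-1}]$ collapses exactly to $\ell(\theta_t,\theta_t)$ rather than to some average over a random $\theta$. This is precisely where determinism of the online learner enters (it rules out extra randomness in $\theta_t$), together with the fresh, independent sampling of $(x_t,y_t)$ across rounds. Everything after that --- the range bound and the Azuma--Hoeffding tail estimate with the stated constant $\eta(\delta,m)=u\sqrt{2\log(1/\delta)/m}$ --- is routine.
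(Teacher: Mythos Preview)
Your proposal is correct and follows essentially the same route as the paper: construct the martingale (difference) sequence $Z_t=\ell(\theta_t,\theta_t)-\hat\ell(\theta_t,\theta_t)$, bound its increments using the uniform bound $u$, and apply Azuma--Hoeffding to obtain the stated $\eta(\delta,m)=u\sqrt{2\log(1/\delta)/m}$. If anything, your treatment is slightly more careful than the paper's, since you make explicit the filtration argument and the role of the online learner's determinism in ensuring $\theta_t$ is $\mathcal F_{t-1}$-measurable.
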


\subsection{Finite Sample Analysis for Arbitrary Data Collection Policies}
\label{sec:main_arbitrary_dc_policies}

All the results stated so far are for the continue data collection strategy
where, at time $t \in [m]$,
the whole trajectory $b _{1:h}$ is collected using the
current policy $\pi(\cdot, \theta _t)$.
Stop and reset data collection strategies do not necessarily collect the
full trajectory under $\pi(\cdot, \theta _t)$.
If the data collection policy $\pi' : V _k \to \Delta(V _k)$
is other than the learned policy, the
analysis can be adapted by accounting
for the difference in distribution of trajectories induced by
the learned policy and the data collection policy.
The insight is that $\sum _{i = 1} ^{h - 1} \ell(\theta, b _i)$ only
depends on $b _{1:h - 1}$, so if no cost increases occur
in this portion of the trajectory, we are effectively sampling the
trajectory using $\pi(\cdot, \theta)$ when using the stop and
reset data collection strategies.

Prior work presented only perceptron-style results for these settings
\cite{collins_incremental_2004,daume2005learning}---we are the first
to present regret guarantees.
Our guarantee depends on the probability with which $b _{1:h - 1}$
is collected solely with $\pi(\cdot, \theta)$.
We state the finite sample analysis result for the case where these
probabilities are not known explicitly, but we are able to estimate them.
The proof of Theorem~\ref{th:stopreset} is found in Appendix~\ref{sec:arbitrary_dc_policies}.

\begin{theorem}
  Let $\pi _t : \V _k \to \Delta(V_k)$ be the data collection
  policy for example $t \in [m]$, which uses either the stop or reset
  data collection strategies.
  Let $\hat \alpha(\theta _t)$ be the empirical estimate of
  the probability of $\pi(\cdot, \theta _t)$ incurring at least one cost increase
  up to time $h - 1$.
  Then,
  \begin{align*}
    &\mathbb P\left(
      \frac 1 m
        \sum _{t = 1} ^m
          \ell(\theta _t, \theta _t) \leq
          \frac 1 m
            \sum _{t = 1} ^m
              \hat \ell(\theta _t, \pi _t)
              +  u \left(
                  1 - \frac 1 m \sum _{t = 1} ^m \hat \alpha(\theta _t)
                \right)
              + 2 \eta(\delta, m)
    \right)
      \geq 1 - \delta,
  \end{align*}
  where $\delta \in (0, 1]$ and
  $\eta(\delta, m) = u \sqrt{2 \log(1 / \delta) / m}$.
  \label{th:stopreset}
\end{theorem}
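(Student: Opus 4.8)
The plan is to reduce this statement to Theorem~\ref{th:whp_bound} by controlling the gap between $\hat\ell(\theta_t, \pi_t)$ — the empirical loss accumulated along the trajectory actually collected with the stop/reset policy $\pi_t$ — and $\hat\ell(\theta_t, \theta_t)$, the (hypothetical) empirical loss along a full trajectory rolled in with $\pi(\cdot,\theta_t)$. The crucial structural observation, already stated in the excerpt, is that $\sum_{i=1}^{h-1}\ell(\theta, b_i)$ depends only on $b_{1:h-1}$, and that stop/reset behave identically to continue as long as no cost increase has occurred on the first $h-1$ transitions. So define, for each $t$, the event $G_t$ that $\pi(\cdot,\theta_t)$ incurs \emph{no} cost increase up to step $h-1$; on $G_t$ the trajectory prefix $b_{1:h-1}$ produced by $\pi_t$ has exactly the same law as the one produced by the continue strategy, hence $\hat\ell(\theta_t,\pi_t)$ and $\hat\ell(\theta_t,\theta_t)$ are coupled to be equal. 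Let $\alpha(\theta_t) = \mathbb P(G_t)$ and let $\hat\alpha(\theta_t)$ be its empirical estimate (a single Bernoulli draw per round, or whatever estimator the appendix specifies).

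The first main step is a coupling/expectation argument: construct a coupling of the two roll-ins that agrees on $G_t$, so that $\mathbb E[\hat\ell(\theta_t,\theta_t) - \hat\ell(\theta_t,\pi_t) \mid \theta_t] \le 2u\,(1-\alpha(\theta_t))$, using the uniform bound $|\sum_i \ell(\theta,b_i)| \le u$ on the disagreement event $G_t^c$. The second step replaces population quantities by empirical ones in two places — the loss sums and the $\alpha$'s — each at the cost of an Azuma--Hoeffding deviation term. I would set up a single martingale difference sequence (or two, combined by a union bound) in the filtration generated by $\theta_1,\ell_1,\dots$: the differences $\ell(\theta_t,\theta_t) - \hat\ell(\theta_t,\theta_t)$ are bounded by $2u$ and mean-zero given the past, giving an $\eta(\delta,m)$ term exactly as in Theorem~\ref{th:whp_bound}; the differences $\alpha(\theta_t) - \hat\alpha(\theta_t)$ are bounded by $1$, giving a $u\cdot\eta(\delta,m)/u$-type term that I would absorb into a second copy of $\eta(\delta,m)$ (this is where the factor $2\eta(\delta,m)$ in the statement comes from). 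The third step is bookkeeping: chain the coupling bound with the two concentration bounds and Theorem~\ref{th:whp_bound}'s guarantee $\frac1m\sum\ell(\theta_t,\theta_t)\le\frac1m\sum\hat\ell(\theta_t,\theta_t)+\eta(\delta,m)$, collect the $(1-\hat\alpha(\theta_t))$ terms, and take a union bound over the (at most two) failure events, rescaling $\delta$ if necessary.

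The step I expect to be the main obstacle is getting the coupling and the measurability clean: one must verify that conditioning on $\theta_t$, the prefix $b_{1:h-1}$ under $\pi_t$ is a sub-probability-measure-preserving restriction of the prefix under $\pi(\cdot,\theta_t)$ on the event $G_t$, \emph{including} the subtlety that for the reset strategy the trajectory may still reach length $h$ and contribute further $\ell(\cdot,b_i)$ terms even after a reset — so the event that matters is precisely ``no cost increase in the first $h-1$ steps,'' not ``no cost increase ever,'' and the definition of $\hat\alpha(\theta_t)$ must track that. A secondary subtlety is that $h$ itself is a random (trajectory-dependent) stopping time, so the ``$u$'' bound must genuinely be uniform over trajectories of all lengths, which is exactly the hypothesis imposed. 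Once those two points are pinned down, the remaining inequalities are the same Azuma--Hoeffding and no-regret manipulations already used for Theorem~\ref{th:whp_bound}.
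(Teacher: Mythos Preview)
Your overall strategy---couple the two roll-ins on the event ``no cost increase in the first $h-1$ steps,'' then use Azuma--Hoeffding---is the right one, and the observations you flag about $h$ being a stopping time and about reset trajectories continuing past the first intervention are exactly the right subtleties. However, the chain of inequalities you propose does not close as written.

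The issue is in your Step~2. You plan to form the martingale with increments $\ell(\theta_t,\theta_t)-\hat\ell(\theta_t,\theta_t)$ and then invoke Theorem~\ref{th:whp_bound}'s conclusion $\frac{1}{m}\sum\ell(\theta_t,\theta_t)\le\frac{1}{m}\sum\hat\ell(\theta_t,\theta_t)+\eta$. But $\hat\ell(\theta_t,\theta_t)$ is never observed: the algorithm only rolls in with $\pi_t$, and the iterates $\theta_{t+1}$ are functions of $\hat\ell(\cdot,\pi_t)$, not $\hat\ell(\cdot,\theta_t)$. Your Step~1 coupling bound is stated \emph{in expectation}, so it controls $\ell(\theta_t,\theta_t)-\ell(\theta_t,\pi_t)$, not $\hat\ell(\theta_t,\theta_t)-\hat\ell(\theta_t,\pi_t)$; combining it with the concentration of $\ell(\theta_t,\theta_t)-\hat\ell(\theta_t,\theta_t)$ leaves you with $\hat\ell(\theta_t,\theta_t)$ on the right-hand side and no way to reach $\hat\ell(\theta_t,\pi_t)$. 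The separate $\alpha-\hat\alpha$ concentration you propose does not bridge this gap either.

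The paper's proof fixes this by running the concentration on the \emph{observed} quantities. First it establishes the population-level inequality $\ell(\theta_t,\theta_t)\le\ell(\theta_t,\pi_t)+u\bigl(1-\alpha(\theta_t)\bigr)$ via a total-variation lemma (Lemma~\ref{lem:expectation}), writing $d_{\pi_t}=\alpha(\theta_t)d_{\theta_t}+(1-\alpha(\theta_t))q$; this is equivalent to your expectation coupling but with the sharp constant $u$ rather than $2u$ (since $0\le\hat\ell\le u$). Then it builds a \emph{single} martingale with increments
\[
\bigl[\ell(\theta_t,\pi_t)-\hat\ell(\theta_t,\pi_t)\bigr]+u\bigl[(1-\alpha(\theta_t))-(1-\hat\alpha(\theta_t))\bigr],
\]
whose differences are bounded by $2u$; one application of Azuma--Hoeffding gives the $2\eta(\delta,m)$ term, and chaining with the population inequality yields the statement directly. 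No union bound and no appeal to Theorem~\ref{th:whp_bound} as a black box are needed---only its proof technique, now applied to $\pi_t$ rather than $\theta_t$.
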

If the probability of stopping or resetting goes to zero as $m$ goes to
infinity, then the term captures the discrepancy between the distributions of
induced by $\pi(\cdot, \theta _t)$ and $\pi_t$ vanishes, and we recover
a guarantee similar to Theorem~\ref{th:whp_bound}.
If the probability of stopping or resetting does not go completely to zero,
it is still possible to provide regret guarantees for the
performance of this algorithm but now with a term that does not vanish
with increasing $m$.
These regret guarantees for the different data collection strategies are novel.

\section{Conclusion}
\label{sec:conclusion}

We propose a framework for learning beam search policies using imitation learning.
We provide regret guarantees for both new and existing algorithms for learning
beam search policies.
One of the main contributions is formulating learning beam search policies in the
learning to search framework.
Policies for beam search are induced via a scoring function.
The intuition is that the best neighbors in
a beam should be scored sufficiently high, allowing them to be kept in the beam
when transitioning using these scores.
Based on this insight, we motivate different surrogate loss functions for
learning scoring functions.
We recover existing algorithms in the literature through
specific choices for the loss function and data collection strategy.
Our work is the first to provide a beam-aware algorithm with
no-regret guarantees.

\subsubsection*{Acknowledgments}

The authors would like to thank
Ruslan Salakhutdinov, Akshay Krishnamurthy, Wen Sun, Christoph Dann, and Kin Olivares for
helpful discussions and detailed reviews.

{
\small
\bibliography{beam_learn}
\bibliographystyle{unsrt}
}

\appendix
\onecolumn

\section{Conversion to Tree-Structured Search Spaces}
\label{sec:ConversionToTreeStructured}

We define a search space as an arbitrary
finite directed graph $G = (V, E)$,
where $V$ is the set of nodes and $E \subset \V\times V$
is the set of directed edges.
Every directed graph $G = (V, E)$ has associated a tree-structured
directed graph $G_p = (V_p, E_p)$ encoding all possible paths through $G$.
An important reason to do this transformation is that, in practice,
policies often incorporate history features,
so they are functions of the whole path leading to a
node in $G$, rather than just a single node in $G$.
A policy becomes a function of single nodes of $G _p$.
If $G$ is tree-structured, $G _p$ is isomorphic to $G$, i.e., they
are the same search space.

The set of terminal nodes $T _p$ contains all
paths from the
initial node $\sinit \in V$ to terminal nodes $v \in T$.
For $v \in V_p$,
we denote
the length of the sequence encoding a
path by $|v|$.
The length of a path $v \in V _p$ is $|v| - 1$.
We write $v _i$ for the $i$-th element of
a path $v \in V _p$.
For all $v \in V$,
$v _i \in V$ for all $i \in [|v|]$ and $v _1 = \sinit$.
The sets
$N _{p, v}, R _{p, v}, T _{p, v}$
for $v \in V _p$
are defined analogously to the
sets $N _v, R _v, T _v$ for
$v \in V$.
For a path $v \in V _p$, $v' \in N _{p, v}$
if $v' _{1: |v| } = v$, $|v| = |v'| - 1$, and
$v' _{ |v'|} \in N _{v' _{ |v|} }$, i.e.,
a path $v' \in V _p$  neighbors $v \in V _p$ if
it can be written as $v$ followed by an additional node in $N _{v _{ |v| }}$.
For $v \in V _p$,
$v' \in R _{p, v}$ if $v$ is a prefix of $v'$
and $v' \in T _{p, v}$ if
$v$ is a prefix of $v'$ and $v' _{ |v'| } \in T$.
As $G _p$ is tree-structured, we can define the depth $d _v$
of a path $v \in V _p$
as its length, i.e., $d _v = |v| - 1$.
If path $v \in V _p$, then prefix $v _{1:i} \in V _p$,
for all $i \in [|v|]$,
i.e., path prefixes are themselves paths.

Tree-structured search spaces are
common in practice.
They often occur in
write-only search spaces,
where once an action is taken,
its effects are irreversible.
Typical search spaces for
sequence tagging and machine translation
are tree-structured:
given a sequence to tag or translate,
at each step we commit to a token
and never get to change it.
When the search space $G$ is not naturally seen as being
tree-structured, the construction described makes it natural to
work with an equivalent tree-structured
search space of paths $G _p$.

If $G$ has cycles, $G _p$ would be infinite.
Infinite cycling in $G _p$ can be prevented by, for example,
introducing a maximum path length or a maximum number of times that
any given node $v \in V$ can be visited.
In this paper, we also assumed that all nodes in
$T _p$ have distance $h$ to the root.
It is possible to transform $G _p$ into
a new tree-structured graph $G' _p$ by padding shorter paths to length $h$.
Let $h$ be the maximum distance of any terminal in $T _p$ to the root.
For each terminal node $v \in T _p$ with distance $d _v < h$ to the root,
we extend the path to $v$ by
appending a linear chain of $h - d _v$ additional nodes.
Node $v$ is no longer a terminal node in $G' _p$, and
all the nodes in $G' _p$ that resulted from extending
the path are identified with $v$.

\section{Convex Upper Bound Surrogate for Expected Beam Transition Cost}
\label{sec:upper_bound}

In this appendix, we design a convex upper bound surrogate
loss $\ell(\cdot, b) : \Theta \to \mathbb R$
for the expected beam transition cost
$\Ex _{b' \sim \pi(b, \cdot) } c(b, b') : \Theta \to \mathbb R$.
Let $A _b = \{ v _1, \ldots, v _n \}$ be an
arbitrary ordering of the neighbors of
$b$, with corresponding costs $c _1, \ldots, c _n$,
with $c _i = c^*(v _i)$ for all $i \in [n]$.
Let $s _1, \ldots, s _n$ be the corresponding scores, with
$s _i = s(v _i, \theta)$ for all $i \in [n]$.
Let $\sigma ^*: [n] \to [n]$
and $\hat \sigma : [n] \to [n]$ be the unique permutations
such that $c _{\sigma ^*(1)} \leq \ldots \leq c _{\sigma ^*(n)}$
and $s _{\hat \sigma (1)} \geq \ldots \geq s _{\hat \sigma (n)}$, respectively,
with ties broken according to the total order on $V$.
We have $c^*(b) = c _{ \sigma ^* (1) }$.
Let $k \in \mathbb N$ be the maximum beam capacity.
Let $b'$ be the beam induced by the scores $s _1, \ldots, s _n$, i.e.,
$b' = \{ v _{ \hat \sigma(1)}, \ldots, v _{ \hat \sigma(k')} \}$,
with $k' = \min(k, n)$ and ties broken according to the total order.

Consider the \emph{upper bound} loss function
(repeated here from Equation \eqref{eq:cvx_upper_bound})
\begin{align}
    \ell(s, c) =
    \max
      \left(
        0, \delta _{k + 1}, \ldots, \delta _n
    \right) ,
    \label{eq:cvx_upper_bound_repeat}
\end{align}
where $\delta _j =
( c _{ \sigma ^*(j) } - c _{ \sigma ^*(1)} )
( s _{\sigma ^*(j)} - s _{\sigma ^*(1)} + 1 )$
for $j \in \{k + 1, \ldots,  n\}$.

This loss function is lower bounded by zero,
so we only need to show that it upper bounds $c(b, b')$ when
there is a cost increase, i.e., when $c(b, b') > 0$.
A cost increase $c(b, b') > 0$ implies that the best element
$v _{\sigma ^*(1)}$ fell off the beam, meaning that
$b' = \{ v _{ \hat \sigma (1) }, \ldots, v _{ \hat \sigma (k) } \} \neq
  \{ v _{ \sigma ^* (1) }, \ldots, v _{ \sigma ^* (k) } \}$, and therefore
$b' \cap \{ v _{ \sigma ^* (k + 1) }, \ldots, v _{ \sigma ^* (n) } \}
  \neq \emptyset$.
Let $v _{\sigma ^*(j)} \in b' \cap \{ v _{ \sigma ^* (k + 1) }, \ldots, v _{ \sigma ^* (n) } \}$,
then $s _{\sigma ^*(j)} \geq s _{ \sigma ^*(1) }$ and
$c(b, b') \leq c _{\sigma ^*(j)} - c _{ \sigma ^* (1) }$,
with $j \in \{ k + 1, \ldots, n \}$.
We have
\begin{align*}
\max
  \left(
    0, \delta_{k + 1}, \ldots, \delta_n
  \right)
  &\geq
  \delta _j \\
  &=
  ( c _{ \sigma ^*(j) } - c _{ \sigma ^*(1)} )
  ( s _{\sigma ^*(j)} - s _{\sigma ^*(1)} + 1 ) \\
  &\geq
    c _{\sigma ^*(j)} - c _{ \sigma ^* (1) } \\
  &\geq c(b, b'),
\end{align*}
proving the upper bound property of the loss
in Equation~\eqref{eq:cvx_upper_bound_repeat}.

This loss is the maximum of a finite
number of affine functions of the scores, and therefore convex
with respect to the score vector $s \in \mathbb R ^n$.
The resulting optimization problem is convex
with respect to the parameters of the scoring function if, for example,
the scoring function is linear with respect to the parameters $\theta \in \Theta$,
i.e., $s(v, \theta) = \theta ^T \phi(v, x)$,
where $\phi : V \times \mathcal X \to \mathbb R ^p$ is a fixed feature
function of the state.
If $A_b$ has no more than $k$ elements, this surrogate loss
is identically zero, i.e., for $k \geq n$, $\ell(s, c) = 0$, for all
$s \in \mathbb R ^n$ and $c \in \mathbb R ^n$.
If $k = 1$, we recover a greedy decoding algorithm and the
loss in Equation~\eqref{eq:cvx_upper_bound_repeat} becomes a weighted hinge loss.

\section{Convexity Considerations for Surrogate Loss Functions}
\label{sec:convex_cosiderations}

It is common in the literature to update the parameters only
when a cost increase occurs
\cite{xu2007learning, huang2012structured, andor2016globally}.
We show that the resulting loss surrogate functions are, in general, non-convex
in the scores.

The following loss is an upper bound on the beam transition loss
$c : \Ek \to \mathbb R$, but is non-convex in the scores:
\begin{align}
  \ell(s, c) =
    (c _{\hat \sigma (k)} - c _{\sigma ^*(1) })
    \max (0, s _{ \hat \sigma (k) } - s _{ \sigma ^* (1) } + 1) .
  \label{eq:wiseman}
\end{align}
The upper bound property for this loss is easy to verify:
if $s \in \mathbb R ^n$ at $b \in \Vk$
induces $b' \in \Vk$ with $c(b, b') > 0$, then
$s _{ \hat \sigma (k) } \geq s _{ \sigma ^* (1) }$ and
$c _{ \hat \sigma (k) } > c _{ \sigma ^* (1) }$,
leading to
\begin{align*}
  (c _{\hat \sigma (k)} - c _{\sigma ^*(1) })
    \max (0, s _{ \hat \sigma (k) } - s _{ \sigma ^* (1) } + 1)
    &\geq c _{\hat \sigma (k)} - c _{\sigma ^*(1) } \\
    &\geq c(b, b'),
\end{align*}
as $v _{\hat \sigma (k)} \in b'$.
This loss is used in \citet{wiseman2016sequence}.
The same reasoning holds when substituting $k$ in
Equation~\eqref{eq:wiseman} by any $i \in [k]$.

We now show
that two aspects commonly present in
the beam-aware literature lead to non-convexity
of the surrogate losses.
The first aspect is
updating the parameters only when
there is a cost increase.
This amounts to defining a new loss function
$\ell' : \mathbb R ^n \times \mathbb R ^n \to \mathbb R$ from
$\ell : \mathbb R ^n \times \mathbb R ^n \to \mathbb R$ of the form
\begin{align*}
  \ell'(s, c) = \ell(s, c) \mathbbm{1}[ c(b, b') > 0 ],
\end{align*}
where $b'$ is induced by $s \in \mathbb R ^n$.
The second aspect that leads to non-convexity is indexing
the score vector $s \in \mathbb R ^n$ or
cost vector $c \in \mathbb R ^n$ with a function of the parameters, e.g.,
permutation $\hat \sigma : [n] \to [n]$
depends on the scores $s \in \mathbb R ^n$ and therefore,
on the parameters $\theta \in \Theta$.
We show non-convexity with respect to the scores
through two simple counter examples.

For the first aspect,
let $k = 2$ and $n = 3$, with
$v _1, v _2, v _3$ having costs $c _1 = 0, c _2 = 1, c _3 = 1$.
Any beam that keeps $v _1$ has no cost increase.
Consider the scores
$s _1 = 1, s _2 = 10, s _3 = 0$
and
$s' _1 = 1, s' _2 = 0, s' _3 = 10$.
Both $s$ and $s'$ lead to no cost increase, as both score vectors
keep $v _1$ in the beam.
For $\ell' : \mathbb R ^n \times \mathbb R ^n \to \mathbb R$ to be convex
in the scores,
we must have
$\ell'(\alpha s + (1 - \alpha) s', c)
  \leq \alpha \ell'(s, c) + (1 - \alpha) \ell'(s', c)$, for all $\alpha \in [0, 1]$.
As both $s$ and $s'$ lead to no cost increase, we have
$\ell'(s, c) = \ell'(s', c) = 0$, yielding the
following necessary condition for convexity:
$\ell(\alpha s + (1 - \alpha) s', c) \leq 0$ for all $\alpha \in [0, 1]$.
For $\alpha = 0.5$, we have
$\overline{s} _1 = 1, \overline{s} _2 = 5, \overline{s} _3 = 5$,
which leads to a cost increase, and therefore to
loss $\ell'(\overline{s}, c) > 0$,
implying that $\ell' : \mathbb R^n \times \mathbb R^n \to \mathbb R$
is non-convex in the scores.

For the second aspect,
consider the loss in
Equation~\eqref{eq:wiseman}.
Ignore the multiplicative term involving the costs
and consider only the hinge part
$\max(0, s _{ \hat \sigma (k) } - s _{ \sigma ^* (k) } + 1 )$.
Let $k = 2$ and $n = 3$.
Consider that the elements $v _1, v _2, v _3$ are sorted in
increasing order of cost; let $s _1 = 2, s _2 = 1, s _3 = 0$,
and $s' _1 = 2, s' _2 = 4, s' _3 = 0$.
In both cases, the hinge part of loss in Equation~\eqref{eq:wiseman}
is zero, but if we take a
convex combination of the scores with $\alpha = 0.5$,
we get $\overline s _1 = 2, \overline s _2 = 2.5, \overline s _3 = 0$,
for which the surrogate loss is nonzero (assuming that the costs of
$v _1, v _2, v _3$ are unique).

\section{Additional Loss Functions}
\label{sec:additional_losses}

We present additional loss functions that were omitted
in Section~\ref{sec:surrogate_losses} and discuss their connections
to previous work.
\paragraph{cost sensitive margin (beam)}
  Prefers the lowest cost element to be scored higher than best runner-up
  in the beam by a cost-weighted margin.
  With unbounded beam capacity, we recover the
  structured max-margin loss of \citet{taskar2003max-margin} for M$^3$Ns.
  \begin{align}
    \ell(s, c) =
      - s _{ \sigma ^*(1) } +
      \max_{i \in \{1,\ldots,k\}} \left(
          c _{\hat \sigma (i) } + s _{ \hat \sigma (i)} \right)
  \end{align}
\paragraph{softmax margin (beam)}
  Log loss that can be understood as
  smoothing the $\max$ in {\it cost sensitive margin (beam)}.
  With unbounded beam capacity, we recover the
  softmax-margin loss of \citet{gimpel2010softmax-margin} for CRFs.
  \begin{align}
          \ell(s, c)
          &= - s _{ \sigma ^*(1) } +
              \log \left(
                \sum _{ i = 1} ^k \exp \left( c_{\hat \sigma (i)} + s _{\hat \sigma (i)} \right)
              \right)
  \end{align}
  \paragraph{weighted pairs (all)}
  Reduces the problem of producing the correct ranking over the neighbors to
  $n(n - 1) / 2$ weighted binary classification problems.
  Hinge terms for pairs with the same cost cancel, effectively expressing
  that we are indifferent to the relative order of the elements of the pair.
  \begin{align}
    \ell(s, c) =
    \sum _{i = 1} ^n
      \sum _{j = i + 1} ^n
        \left( c _{ \sigma ^*(j) } - c _{ \sigma ^*(i)} \right)
        \max \left( 0,  s _{\sigma ^*(j)} - s _{\sigma ^*(i)} + 1 \right)
  \end{align}
  \paragraph{weighted pairs (bipartite)}
  Only weighted pairs between elements than ought to be included in the beam
  and those that ought to excluded from the beam.
  A similar loss has been proposed for
  bipartite ranking, where the
  goal is to order all positive examples before all negative examples
  \begin{align}
    \ell(s, c) =
    \sum _{i = 1} ^k
      \sum _{j = k + 1} ^n
        \left( c _{ \sigma ^*(j) } - c _{ \sigma ^*(i)} \right)
        \max \left( 0,  s _{\sigma ^*(j)} - s _{\sigma ^*(i)} + 1 \right)
  \end{align}
  \paragraph{weighted pairs (hybrid)}
  Similar to weighted pairs bipartite but we also include the pairs for
  the elements that ought to be included in the beam
  \begin{align}
    \ell(s, c) =
    \sum _{i = 1} ^k
      \sum _{j = i + 1} ^n
        \left( c _{ \sigma ^*(j) } - c _{ \sigma ^*(i)} \right)
        \max \left( 0,  s _{\sigma ^*(j)} - s _{\sigma ^*(i)} + 1 \right)
  \end{align}

  The \textit{weighted pairs (all)} loss provides many different variants as exemplified
  by \textit{weighted pairs (bipartite)} and \textit{weighted pairs (hybrid)}.
  We believe that exploring the ranking literature can lead to
  interesting insights on what losses to use for learning beam search policies
  in our framework.

\section{No-Regret Guarantees}
\label{sec:noregret}

This section presents analysis that leads to proofs of theorems
\ref{th:no_regret_pop},~\ref{th:lemma_cost_ub},~\ref{th:whp_bound}, and
\ref{th:stopreset}.
We analyze
\begin{align*}
  c(\theta) =
    \Ex _{(x, y) \sim \mathcal D}
      \Ex _{\hat y \sim \pi(\cdot, \theta)}
        c _{x, y} (\hat y) .
\end{align*}
The prediction cost $c _{x, y}(\hat y)$ is generated by sampling
a beam trajectory $b _{1:h}$ with
policy $\pi(\cdot, \theta)$.
The prediction $\hat y$ is extracted from $b _h$.
We have
\begin{align*}
  c(\theta) =
    \Ex _{(x, y) \sim \mathcal D}
      \Ex _{b _{1:h} \sim \pi(\cdot, \theta)}
        \left(
          c ^*(b _1) + \sum _{i = 1} ^{h - 1}
            c(b _i, b _{i + 1})
        \right).
\end{align*}
As $b _1$ depends only on $x \in \mathcal X$,
$c ^*(b _1)$ does not depend on the parameters $\theta$ and therefore
can be ignored for optimization purposes.
We analyze instead the surrogate
\begin{align}
  \ell (\theta, \theta') =
    \Ex _{(x, y) \sim \mathcal D}
      \Ex _{b _{1:h} \sim \pi(\cdot, \theta')}
        \left(
          \sum _{i = 1} ^{h - 1} \ell(\theta, b _i)
        \right),
  \label{eq:regret_full_exp}
\end{align}
where $\ell(\cdot, b) : \Theta \to \mathbb R$ is a surrogate for
$\Ex _{b' \sim \pi(b, \cdot)} c(b, b') : \Theta \to \mathbb R$.
See Section~\ref{sec:surrogate_losses} for extended discussion on the
motivation behind surrogate loss $\ell(\cdot, b)$.
It is convenient to
assume that the policy $\pi(\cdot, \theta') : V_k \to \Delta(V_k)$ used
to collect the beam trajectory $b_{1:h}$ can
be different than the policy $\pi(\cdot, \theta) : V_k \to \Delta(V_k)$
used to evaluate the surrogate losses at the visited beams.
The surrogate loss function $\ell : \Theta \times \Vk \to \mathbb R$
depends on the sampled example $(x, y) \in \mathcal X \times \mathcal Y$,
but we omit this dependency for conciseness.

\subsection{No-Regret Guarantees with Explicit Expectations}
\label{sec:noregret_explicit_expectations}

Here we present the proofs of Theorem~\ref{th:no_regret_pop} and Theorem~\ref{th:lemma_cost_ub}.
It is informative to consider the case where
we have access to both explicit expectations.
In this case, the no-regret algorithm is run on the sequence of losses
$\ell(\theta_1, \theta _1), \ldots, \ell(\theta_m, \theta _m)$
yielding average regret
\begin{align*}
  \gamma _m
    &= \frac 1 m
    \sum _{t = 1} ^m
        \ell(\theta _t, \theta _t) -
      \min _{\theta \in \Theta}
        \frac 1 m
      \sum _{t = 1} ^m \ell (\theta, \theta _t) .
\end{align*}
As the sequence $\theta _1, \ldots, \theta _m$ is generated by a no-regret
algorithm, the average regret goes to zero as $m$ goes to infinity.
This result tells us that
the uniform mixture obtained by sampling uniformly
at random one of $\theta _1, \ldots, \theta _m$ and acting according to it
for the full trajectory,
is competitive with the best policy in $\Theta$
along the same induced trajectories.
Note that
\begin{align*}
  \frac 1 T
    \sum _{t = 1} ^T \ell(\theta _t, \theta _t) -
    \min _{\theta \in \Theta}
      \frac 1 T
      \sum _{t = 1} ^T \ell (\theta, \theta _t)
  &= \Ex _{t \sim U(1, T)}
    \ell(\theta _t, \theta _t) -
  \min _{\theta \in \Theta}
    \Ex _{t \sim U(1, T)}
    \ell (\theta, \theta _t) ,
\end{align*}
where $U(1, T)$ denotes the uniform distribution over $[T]$.
Performance guarantees are obtained from the rearrangement
\begin{align*}
  \frac 1 m
    \sum _{t = 1} ^m \ell(\theta _t, \theta _t)
      &= \epsilon _m + \gamma _m,
\end{align*}
where
\begin{align*}
  \epsilon _m
    &= \min _{\theta \in \Theta}
      \frac 1 m
      \sum _{t = 1} ^m \ell (\theta, \theta _t) , \\
  \gamma _m
    &= \frac 1 m
      \sum _{t = 1} ^m \ell(\theta _t, \theta _t) -
      \min _{\theta \in \Theta}
        \frac 1 m
        \sum _{t = 1} ^m \ell (\theta, \theta _t) .
\end{align*}

Furthermore, if the surrogate loss
$\ell(\cdot, b) : \Theta \to \mathbb R$
upper bounds the expected
beam transition cost $\Ex _{b' \sim \pi(b, \cdot)} c(b, b') : \Theta \to \mathbb R$,
i.e., $\ell(\theta, b) \geq \Ex _{b' \sim \pi(b, \theta)} c(b, b')$ for
all $b \in V_k$ and all $\theta \in \Theta$, we have
\begin{align*}
  \mathbb E _{b _{1:h} \sim \pi(\cdot, \theta)}
    \left(
      \sum _{i = 1} ^{h - 1} c( b _i, b _{i + 1} )
    \right)
    &\leq
    \mathbb E _{b _{1:h} \sim \pi(\cdot, \theta)}
    \left(
      \sum _{i = 1} ^{h - 1} \ell(\theta, b _i)
    \right),
\end{align*}
and consequently,
\begin{align*}
  \frac 1 m \sum _{t = 1} ^m c(\theta _t)
    \leq \frac 1 m \sum _{t = 1} ^m \ell(\theta _t, \theta _t)
      + \Ex _{ (x, y) \sim \mathcal D } c^*(b _1),
\end{align*}
i.e, we are able to
use the expected surrogate loss incurred by the uniform mixture of
$\theta_1, \ldots, \theta _m$ to upper bound the
expected labeling cost resulting from labeling examples
$(x, y) \sim \mathcal D$ with the uniform mixture of
$\theta _1, \ldots, \theta _m$.

As the
sequence $\theta _1, \ldots, \theta _m$ is chosen by a no-regret algorithm,
$\gamma _m$ goes to zero as $m$ goes to infinity.
The term $\epsilon _m$ is harder to characterize as $m$ goes to infinity.
We are guaranteed that the uniform mixture of
$\theta _1, \ldots, \theta _m$ and, as result the best policy in
$\theta _1, \ldots, \theta _m$, is competitive with the best policy in hindsight
$\theta ^* _m$ $ \in
  \argmin _{\theta \in \Theta} 1 / m \sum _{t  = 1} ^m \ell(\theta, \theta _t)$.
For the performance guarantees to be interesting, it is necessary
for $\epsilon _m$ to remain small as $m$ goes to infinity, i.e.,
there must exist a policy in
$\Theta$ that performs well on the distribution of trajectories induced by
the uniform mixture of $\theta _1, \ldots, \theta _m$.
We think that this remark is often not adequately discussed in the literature.
Nonetheless, for expressive policy classes,
e.g., neural networks, it is reasonable to assume the existence of such a policy.

\subsection{Finite Sample Analysis}
\label{sec:finite_sample}

Next we provide a proof of Theorem~\ref{th:whp_bound}.
We typically do not have access to the explicit expectations in
Equation~\eqref{eq:regret_full_exp}.
What we do have access to is an estimator
\begin{align*}
  \hat \ell (\theta, \theta') = \sum _{i = 1} ^{h - 1} \ell(\theta, b _i),
\end{align*}
which is obtained by sampling an example $(x, y)$ from the
data generating distribution $\mathcal D$,
and executing policy $\pi(\cdot, \theta')$ to collect a trajectory
$b_{1:h}$.

Our no-regret algorithm is then run on the sequence of sampled losses,
yielding the sequence $\theta _1, \ldots, \theta _m$ and average regret
\begin{align*}
  \hat \gamma _m
  &= \frac 1 m
    \sum _{t = 1} ^m \hat \ell(\theta _t, \theta _t) -
    \min _{\theta \in \Theta}
      \frac 1 m
        \sum _{t = 1} ^m \hat \ell (\theta, \theta _t) .
\end{align*}
We show that the true population loss of the uniform mixture of
$\theta _1, \ldots, \theta _m$ is, with high probability,
not much larger than the empirical loss
observed on the sampled trajectories, i.e.,
\begin{align}
  \mathbb P\left(
    \frac 1 m
      \sum _{t = 1} ^m
        \ell(\theta _t, \theta _t) \leq
        \frac 1 m
          \sum _{t = 1} ^m
            \hat \ell(\theta _t, \theta _t)  + \eta(\delta, m)
  \right)
    \geq 1 - \delta,
    \label{eq:whp_result}
\end{align}
where $\delta \in (0, 1]$ is related to the probability of the statement, and
$\eta(\delta, m)$ depends only on $\delta$ and $m$.
Given this result, we are able to give performance guarantees for
the uniform mixture of $\theta _1, \ldots, \theta _m$
as
\begin{align}
  \mathbb P\left(
    \frac 1 m
      \sum _{t = 1} ^m \ell(\theta _t, \theta _t)
        \leq \hat \epsilon _m + \hat \gamma _m + \eta(\delta, m)
    \right) \geq 1 - \delta.
    \label{eq:whp_perf_guarantees}
  \end{align}
\begin{proof}
Define a function on beam trajectories.
Assume that we have $0 \leq \ell(\theta, b _{1:h}) \leq u$,
with $u \in \mathbb R$,
for all $(x, y) \in \mathcal X \times \mathcal Y$ and
for all beam trajectories $b _{1:h}$ through $G _k$,
i.e., $b _1 = \binit$,  $b _h \in T_k$,
$b _i \in V_k$ for all $i \in [n]$,
and $b _{i + 1} \in N _{b _i}$ for $i \in [h - 1]$.
As a result, $ 0 \leq \ell(\theta, \theta') \leq u$
and $0 \leq \hat \ell(\theta, \theta') \leq u$,
for all $\theta, \theta' \in \Theta$
and all $(x, y) \in \mathcal X \times \mathcal Y$.
In our case,
\begin{align}
  \ell(\theta, b_{1:h}) = \sum _{i = 1} ^{h - 1} \ell(\theta, b_i) .
\end{align}
Construct the martingale sequence
\begin{align}
  z _t = \sum _{j = 1} ^t
    \left(
      \ell(\theta _j, \theta _j) - \hat \ell( \theta _j, \theta _j)
    \right),
  \label{eq:first_martingale}
\end{align}
for $t \in [m]$.
It is simple to verify
that the sequence $z _1, \ldots, z _m$
is a martingale, i.e., that we have
$\Ex _{z _t | z _1, \ldots, z _{t - 1}} z _t = z _{t - 1}$ for all $t \in [m]$.
Furthermore, we have $| z _t - z _{t - 1} | \leq u$ for all $t \in [m]$,
where $z _0 = 0$.
The high probability result is obtained by applying the
Azuma-Hoeffding inequality
to the martingale sequence $z _t$, for $t \in \mathbb N$, which yields
\begin{align}
  \mathbb P\left(
    \frac 1 m
      \sum _{t = 1} ^m
        \ell(\theta _t, \theta _t) \leq
        \frac 1 m
          \sum _{t = 1} ^m
            \hat \ell(\theta _t, \theta _t)  + u \sqrt{
                \frac {2 \log (1 / \delta ) } {m} }
  \right)
    \geq 1 - \delta.
    \label{eq:whp_result_expanded}
\end{align}

Revisiting Equation~\eqref{eq:whp_perf_guarantees}, for fixed $\delta \in (0, 1]$,
as $m$ goes to infinity,
we have that both $\hat \gamma _m$ and $\eta(\delta, m)$ go to zero,
proving high probability no-regret guarantees for this setting.
\end{proof}

\subsection{Finite Sample Analysis for Arbitrary Data Collection Policies}
\label{sec:arbitrary_dc_policies}

Finally, in this section, we provide a proof of Theorem~\ref{th:stopreset}.
All the results stated so far are for the continue data collection strategy
where, at time $t \in [m]$,
the whole trajectory $b _{1:h}$ is collected using the
current policy $\pi(\cdot, \theta _t)$.
Stop and reset data collection strategies do not necessarily collect the
full trajectory under $\pi(\cdot, \theta _t)$.
If a transition $(b, b') \sim \pi(\cdot, \theta _t)$ leads to a cost increase,
then,
the stop data collection strategy stops collecting the trajectory at $b'$, and
the reset data collection strategy, the oracle policy $\pi^*(\cdot, c^*)$ is used to
sample the transition at $b$ instead.

In this section, we relate the expected loss of
$\pi(\cdot, \theta)$ on trajectories collected by
a different policy
$\pi'$ to the expected loss of $\pi(\cdot, \theta)$ on its own trajectories.
Consider the following auxiliary lemma:
\begin{lemma}
  Let $f : X \to \mathbb R$ be a function such that
  $f(x) \in [a, a + r]$, for $a, r \in \mathbb R$ and $r \geq 0$
  for all $x \in X$,
  that can be either discrete or continuous.
  Let $d ,d'$ be two probability distributions over $X$.
  We have
  \begin{align}
    | \mathbb E _{x \sim d} f(x) - \mathbb E _{x \sim d'} f(x)|
      &\leq r / 2  { || d - d' || } _1.
    \label{eq:lemma}
  \end{align}
  \label{lem:expectation}
\end{lemma}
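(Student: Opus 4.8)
The plan is to reduce to the case of a function bounded by $r/2$ in absolute value by subtracting a well-chosen constant, and then to bound the difference of expectations by a simple $\ell_\infty$--$\ell_1$ (H\"older-type) estimate. First I would set $g(x) = f(x) - (a + r/2)$; since $f(x) \in [a, a+r]$ this gives $g(x) \in [-r/2, r/2]$, hence $|g(x)| \le r/2$ for every $x \in X$. The crucial point is that, because $d$ and $d'$ are both probability distributions (each integrates, or sums, to $1$), shifting $f$ by the constant $a + r/2$ leaves the difference of expectations unchanged:
\begin{align*}
  \Ex_{x \sim d} f(x) - \Ex_{x \sim d'} f(x)
    = \Ex_{x \sim d} g(x) - \Ex_{x \sim d'} g(x).
\end{align*}

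Next I would express the right-hand side as a single integral against the signed measure $d - d'$ and apply the triangle inequality for integrals (or sums, in the discrete case):
\begin{align*}
  \left| \Ex_{x \sim d} g(x) - \Ex_{x \sim d'} g(x) \right|
    &= \left| \int_X g(x)\,\bigl(d(x) - d'(x)\bigr)\,dx \right| \\
    &\le \int_X |g(x)|\,\bigl|d(x) - d'(x)\bigr|\,dx.
\end{align*}
Bounding $|g(x)| \le r/2$ and pulling the constant out of the integral yields $\tfrac{r}{2}\int_X |d(x) - d'(x)|\,dx = \tfrac{r}{2}\,\|d - d'\|_1$, which is exactly Equation~\eqref{eq:lemma}. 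The discrete case is identical with $\sum$ in place of $\int$.

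There is essentially no hard part here; the only thing that needs care is the centering step, which is precisely what produces the factor $r/2$ rather than the weaker $\|f\|_\infty\,\|d - d'\|_1$ one would obtain by applying the same estimate to $f$ directly. An alternative that avoids centering is to observe that, subject to a fixed total-variation budget, the difference of expectations is maximized by transporting mass from where $f$ is largest to where it is smallest, giving the bound $(\sup f - \inf f)\cdot\tfrac12\|d - d'\|_1 = r\cdot\tfrac12\|d - d'\|_1$; I would nevertheless use the centering argument since it is the shortest to write down cleanly and needs no optimality argument.
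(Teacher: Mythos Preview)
Your proof is correct and follows essentially the same approach as the paper: center $f$ by subtracting $a + r/2$, use that both measures have total mass $1$ so the constant cancels, and then apply the $\ell_\infty$--$\ell_1$ (H\"older) bound. The paper carries a generic constant $c$ through and chooses $c = a + r/2$ at the end, and only writes out the discrete case, but the argument is the same.
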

\begin{proof}
  We prove the result for the case where $X$ is discrete, i.e., $d$ and $d'$
  are discrete probability distributions. The result for discrete distributions
  is sufficient for our purposes. Let $|X| = e$, with $e \in \mathbb N$, then
  $d, d' \in \mathbb R ^e$. We have
  \begin{align*}
    |\mathbb E _{x \sim d} f(x) - \mathbb E _{x \sim d'} f(x)|
      &= \left|\sum _{x \in X} d(x) f(x) - \sum _{x \in X} d'(x) f(x) \right| \\
      &= \left|\sum _{x \in X} d(x) ( f(x) - c )  - \sum _{x \in X} d'(x) (f(x) - c) \right| \\
      &= \left|(d - d') ^T ( f - c ) \right| \\
      &\leq || f - c || _\infty ||d - d'||_1,
  \end{align*}
  where $c$ is an arbitrary constant in $\mathbb R$ and $f \in \mathbb R ^e$
  is the vector representation of the function. In the second equality, we use
  $\sum _{x \in X} d(x) = \sum _{x \in X} d'(x) = 1$.
  In the third equality, we express the expectations as inner products and
  slightly abuse notation by denoting the coordinate-wise subtraction of $c$ from $f$ as $f - c$.
  In the
  final inequality, we use the generalized Cauchy–Schwarz inequality for the pair of dual norms
  $||\cdot || _1$ and $|| \cdot || _\infty$. The desired result is obtained by
  choosing $c = a + r/2$.
\end{proof}

Often,
$\pi' = (1 - \beta) \pi(\cdot, \theta) + \beta \pi ^*( \cdot, c^*)$
for $\beta \in [0, 1]$,
i.e., a probabilistic interpolation of the learned policy and the
oracle policy.
We do a more general analysis that will be useful to provide
regret guarantees for the stop and reset data collection strategies.
It is not necessarily the case that, for a roll-in policy
$\pi' : V _k \to \Delta(V _k)$, there exists $\theta' \in \Theta$
such that $\pi' = \pi (\cdot, \theta')$.
We modify the notation in Equation~\eqref{eq:regret_full_exp}
to capture this fact and write
\begin{align}
  \ell (\theta, \pi') =
  \Ex _{(x, y) \sim \mathcal D}
    \Ex _{b _{1:h} \sim \pi'}
      \left(
        \sum _{i = 1} ^{h - 1} \ell(\theta, b _i)
      \right) .
\end{align}

The roll-in policies $\pi' : V_k \to \Delta(V _k)$ that we
consider induce distributions over beam trajectories in $G _k$
that have a component where the beam trajectory up to $h - 1$ can be
thought as coming from $\pi(\cdot, \theta)$.
For a policy $\pi'$ that is somehow derived from the
learned policy $\pi(\cdot, \theta)$, we write
$d _{ \pi' } = \alpha(\theta, x, y) d _\theta +
  (1 - \alpha(\theta, x, y) ) q$, where $d _{\pi'}$
  is the distribution over trajectories induced by the roll-in policy $\pi'$,
  $d _\theta$ is the distribution over
  trajectories induced by the learned policy $\pi(\cdot, \theta)$,
  $q$ is the residual distribution over trajectories of the
  component that is not captured by $d _\theta$,
  and
  $\alpha(\theta, x, y)$ is the probability that the trajectory up to $b _{h - 1}$
  is drawn solely from $\pi(\cdot, \theta)$.
For example,
for the policy
$\pi' = (1 - \beta) \pi(\cdot, \theta) + \beta \pi ^*( \cdot, c^*)$,
we have $\alpha(\theta, x, y) = (1 - \beta) ^{h - 2}$, where
$\alpha(\theta, x, y)$ is independent of $\theta$ in this case.
In this example, $\pi'$, at each step of the trajectory of length $h$,
flips a biased coin and acts with
probability $1 - \beta$ according to $\pi(\cdot, \theta)$ and with
probability $\beta$ according to $\pi(\cdot, c^*)$.

\paragraph{Relating expectations}
We use Lemma~\ref{lem:expectation} to relate
$\mathbb E _{b _{1:h} \sim \pi(\cdot, \theta)}
  \left(
    \sum _{i = 1} ^{h - 1} \ell(\theta, b _i)
  \right)$
and
$\mathbb E _{b _{1:h} \sim \pi'}
  \left(
    \sum _{i = 1} ^{h - 1} \ell(\theta, b _i)
  \right)$.
We have
\begin{align*}
  {|| d _{\pi'} - d _{\theta} ||} _1
    &= {|| \alpha(\theta, x, y) d _{\theta} + (1 - \alpha(\theta, x, y) ) q  - d _{\theta} ||} _1 \\
    &= (1 - \alpha(\theta, x, y) ) {|| q  - d _{\theta} ||} _1 \\
    &\leq 2 (1 - \alpha(\theta, x, y) ),
\end{align*}
where we used that ${|| d_1 - d_2 ||} _1 \leq 2$ for any two distributions $d_1, d_2$.
Revisiting Equation~\eqref{eq:lemma}, we have
\begin{align*}
  \mathbb E _{b_{1:h} \sim \pi(\cdot, \theta)}
    \left(
      \sum _{i = 1} ^{h - 1} \ell(\theta, b _i)
    \right) \leq
    \mathbb E _{b_{1:h} \sim \pi'}
    \left(
      \sum _{i = 1} ^{h - 1} \ell(\theta, b _i)
    \right) + u (1 - \alpha(\theta, x, y)),
\end{align*}
and as a result
\begin{align}
  \ell(\theta, \theta)
    &=
    \mathbb E _{(x, y) \sim \mathcal D}
      \mathbb E _{b_{1:h} \sim \pi(\cdot, \theta)}
    \left(
      \sum _{i = 1} ^{h - 1} \ell(\theta, b _i)
    \right) \nonumber \\
    &\leq
    \mathbb E _{(x, y) \sim \mathcal D}
      \left(
      \mathbb E _{b_{1:h} \sim \pi'}
    \left(
      \sum _{i = 1} ^{h - 1} \ell(\theta, b _i)
    \right) + u (1 - \alpha(\theta, x, y)
    \right) \nonumber \\
    &=
    \ell(\theta, \pi') + u ( 1 - \alpha(\theta) ),
    \label{eq:rollin_difference}
\end{align}
where we defined
$\alpha(\theta) = \mathbb E _{(x, y) \sim \mathcal D} \alpha(\theta, x, y)$,
i.e., the probability of sampling the beam trajectory up to time $h - 1$ solely with
$\pi(\cdot, \theta)$, or equivalently, the probability of $\pi(\cdot, \theta)$
incurring no cost increases up to time $h - 1$.

\paragraph{Finite sample analysis with known schedules}
We now consider the finite sample analysis for the setting
considered in this section.
By arguments similar to those in Appendix~\ref{sec:finite_sample}, we have
\begin{align*}
  \mathbb P\left(
    \frac 1 m
      \sum _{t = 1} ^m
        \ell(\theta _t, \pi _t) \leq
        \frac 1 m
          \sum _{t = 1} ^m
            \hat \ell(\theta _t, \pi _t)  + u \sqrt{
                \frac {2 \log (1 / \delta ) } {m} }
  \right)
    \geq 1 - \delta,
\end{align*}
which, combining with Equation~\eqref{eq:rollin_difference} implies
\begin{align}
  \mathbb P\left(
    \frac 1 m
      \sum _{t = 1} ^m
        \ell(\theta _t, \theta _t) \leq
        \frac 1 m
          \sum _{t = 1} ^m
            \hat \ell(\theta _t, \pi _t)  + u \sqrt{
                \frac {2 \log (1 / \delta ) } {m} }
            + u \left(
                1 - \frac 1 m \sum _{t = 1} ^m \alpha(\theta _t)
              \right)
  \right)
    \geq 1 - \delta,
    \label{eq:whp_result_last}
\end{align}

Equation~\eqref{eq:whp_result_last} can be simplified
for roll-in policies
$\pi _t = (1 - \beta _t) \pi(\cdot, \theta) + \beta _t \pi ^*(\cdot, c ^*)$
with fixed interpolation schedules $\beta _t$, for $t \in \mathbb N$.
For example, for $\beta _1 = 1$  for $t \in [t _0]$,
for some $t _0 \in \mathbb N$, and $\beta _t = 0$ for $t > t _0$, we have
\begin{align}
  \mathbb P\left(
    \frac 1 m
      \sum _{t = 1} ^m
        \ell(\theta _t, \theta _t) \leq
        \frac 1 m
          \sum _{t = 1} ^m
            \hat \ell(\theta _t, \pi _t)  + u \sqrt{
                \frac {2 \log (1 / \delta ) } {m} }
            + u
                \min
                  \left(
                    1,
                    \frac {t _0} m
                  \right)
  \right)
    \geq 1 - \delta.
    \label{eq:whp_result_last_simplified}
\end{align}

\paragraph{Guarantees for the stop and reset data collection strategies}

The previous analysis allows us to provide regret guarantees
for the reset data collection strategy.
Steps in the trajectory are sampled using
the learned policy $\pi(\cdot, \theta)$ when they do not result
in cost increase, and sampled from $\pi^*(\cdot, c^*)$ otherwise, i.e.,
while sampling
a trajectory $b _1, \ldots, b _i$ with $\pi(\cdot, \theta)$, if a
cost increase would occur on the transition from $b _i$ to
$b' \sim \pi(b _i, \theta)$, then rather than transitioning to
$b _{i + 1} = b'$, we transition to $b _{i + 1} \sim \pi ^*(b _i, c^*)$,
and continue from $b _{i + 1}$ until a terminal beam $b _h \in T _k$ is reached.
In this case, $\alpha(\theta, x, y)$ is interpreted as the probability that
the trajectory $b _{1:{h - 1}}$ on the beam search $G _k$
induced by $x$ is sampled using only $\pi(\cdot, \theta)$,
i.e., no cost increases occur up to time $h - 1$.

We can use this fact along with the previous results
to obtain a regret statement for
both the explicit expectation
and the finite sample cases.
The main difficulty is that $\alpha(\theta, x, y)$ and $\alpha(\theta)$
are not known.
Again, the only way that we have access to them is through a sample estimate
$\hat \alpha(\theta)$.
We construct a martingale for this case involving both the randomness of
the loss function and the reset probability.

We can use this information along with Azuma-Hoeffding inequality to
give a joint concentration result.
The martingale sequence that we now construct is
\begin{align}
  z _t =
    \sum _{j = 1} ^t \left(
    \ell(\theta _j, \pi _j) -
        \hat \ell(\theta _j, \pi _j) +
      u \left(
        1 - \alpha (\theta _j)
      \right) -
        u \left(
          1 - \hat \alpha (\theta _j)
        \right)
    \right) ,
    \label{eq:second_martingale}
\end{align}
which now includes the random variables of the estimator of the
probability that we will reset at least once.
Note that $\hat \alpha (\theta)$ also depends on
  $x, y, b _{1:h}$, which we omit for simplicity.
Similarly to the martingale arguments in Equation~\eqref{eq:first_martingale},
Equation~\eqref{eq:second_martingale} defines a martingale.
In this case, we have $| z _t - z _{t - 1} | \leq 2u$ for all $t \in [m]$,
and $z _0 = 0$.
Applying Azuma-Hoeffding yields a result similar
to Equation~\eqref{eq:whp_result_last}, i.e.,
\begin{align}
  \mathbb P\left(
    \frac 1 m
      \sum _{t = 1} ^m
        \ell(\theta _t, \theta _t) \leq
        \frac 1 m
          \sum _{t = 1} ^m
            \hat \ell(\theta _t, \pi _t)  +
            2u \sqrt{
                \frac {2 \log (1 / \delta ) } {m} }
            + u \left(
                1 - \frac 1 m \sum _{t = 1} ^m \hat \alpha(\theta _t)
              \right)
  \right)
    \geq 1 - \delta,
  \label{eq:martingale_both}
\end{align}

Even if $1 / m \sum _{t = 1} ^m \hat \alpha (\theta _t)$ remains at
some nonzero quantity as $m$ goes to infinity, we can
still give a guarantee with respect to this reset probability.
Namely, if we observe that we are most of the time sampling the full
trajectory with the learned policy, then we guarantee that we are not
too far away from the true loss of the mixture policy.

\end{document}